
\documentclass{article}

\usepackage{microtype}
\usepackage{graphicx}
\usepackage{subcaption}
\usepackage{booktabs} 
\usepackage[T1]{fontenc}
\usepackage{hyperref}


\usepackage[accepted]{icml2021}

\usepackage{amsfonts}
\usepackage{array}
\usepackage{multirow}
\usepackage{threeparttable}
\usepackage{amsmath, amsthm, amssymb}
\DeclareMathOperator*{\argmax}{arg\,max}

\newtheorem{prop}{Proposition}


\icmltitlerunning{Sliced Iterative Normalizing Flows}

\begin{document}

\twocolumn[
\icmltitle{Sliced Iterative Normalizing Flows}



\icmlsetsymbol{equal}{*}

\begin{icmlauthorlist}
\icmlauthor{Biwei Dai}{ucb}
\icmlauthor{Uro{\v s} Seljak}{ucb,lbl}
\end{icmlauthorlist}

\icmlaffiliation{ucb}{Department of Physics, University of California, Berkeley, California, USA}
\icmlaffiliation{lbl}{Lawrence Berkeley National Laboratory, Berkeley, California, USA}

\icmlcorrespondingauthor{Biwei Dai}{biwei@berkeley.edu}

\icmlkeywords{Generative Models, Normalizing Flow, Optimal Transport, Sliced Wasserstein Distance}

\vskip 0.3in
]



\printAffiliationsAndNotice{}  

\begin{abstract}
We develop an iterative (greedy) deep learning (DL) algorithm
which is able to transform an arbitrary probability distribution function (PDF) into the target PDF. The model is based on iterative Optimal Transport of a series of 1D slices, matching on each slice the marginal PDF to the target. 
The axes of the orthogonal slices are chosen to maximize the PDF difference using Wasserstein distance at each iteration, which 
enables the algorithm to scale well to high dimensions. 
As special cases of this algorithm, we introduce two sliced iterative Normalizing Flow (SINF) models, which map from the data to the latent space (GIS) and vice versa (SIG). 
We show that SIG is able to generate high quality samples of image datasets, which match the GAN benchmarks, while GIS obtains competitive results on density estimation tasks compared to the density trained NFs, and is more stable, faster, and achieves higher $p(x)$ when trained on small training sets. SINF
approach deviates significantly from the current DL paradigm, as it is greedy and does not use concepts such as mini-batching, stochastic gradient descent and gradient back-propagation through deep layers. 

\end{abstract}

\section{Introduction}

\label{sec:introduction}

Latent variable generative models such as Normalizing Flows (NFs) \citep{rezende2015variational,dinh2014nice,dinh2016density,kingma2018glow}, Variational AutoEncoders (VAEs) \citep{kingma2013auto,rezende2014stochastic} and Generative Adversarial Networks (GANs) \citep{goodfellow2014generative, radford2015unsupervised} aim to model the distribution $p(x)$ of high-dimensional input data $x$ by introducing a mapping from a latent variable $z$ to $x$, where $z$ is assumed to follow a given prior distribution $\pi(z)$. These models usually parameterize the mapping using neural networks, and the training of these models typically consists of minimizing a dissimilarity measure between the model distribution and the target distribution. For NFs and VAEs, maximizing the marginal likelihood is equivalent to minimizing the Kullback–Leibler (KL) divergence. While for GANs, the adversarial training leads to minimizations of the Jenson-Shannon (JS) divergence \citep{goodfellow2014generative}. The performance of these models largely depends on the following aspects: \newline\ \ \ \textbf{1)} The parametrization of the mapping (the architecture of the neural network) should match the structure of the data and be expressive enough. Different architectures have been proposed \citep{kingma2018glow, van2017neural,karras2017progressive, karras2019style}, but to achieve the best performance on a new dataset one still needs extensive hyperparameter explorations \citep{lucic2018gans}.\newline \ \ \  \textbf{2)} The dissimilarity measure (the loss function) should be appropriately chosen for the tasks. For example, in high dimensions the JS divergence is more correlated with the sample quality than KL divergence \citep{huszar2015not, theis2015note}, which is believed to be one of the reasons that GANs are able to generate higher quality samples than VAEs and NFs. However, JS divergence is hard to directly work with, and the adversarial training could bring many problems such as vanishing gradient, mode collapse and non-convergence \citep{arjovsky2017towards, wiatrak2019stabilizing}.

To avoid these complexities, in this work we adopt a different approach to build the map from latent variable $z$ to data $x$. We approach this problem from the Optimal Transport (OT) point of view. OT studies whether the transport maps exist between two probability distributions, and if they do, how to construct the map to minimize the transport cost. Even though the existence of transport maps can be proved under mild conditions \citep{villani2008optimal}, it is in general hard to construct them in high dimensions. We propose to decompose the high dimensional problem into a succession of 1D transport problems, where the OT solution is known. The mapping is iteratively augmented, and it has a NF structure that allows explicit density estimation and efficient sampling. We name the algorithm Sliced Iterative Normalizing Flow (SINF). Our objective function is inspired by the Wasserstein distance, which is defined as the minimal transport cost and has been widely used in the loss functions of generative models \citep{arjovsky2017towards, tolstikhin2017wasserstein}. We propose a new metric, max K-sliced Wasserstein distance, which enables the algorithm to scale well to high dimensions. 

In particular, SINF algorithm has the following properties: \newline
    \textbf{1)} The performance is competitive compared to state-of-the-art (SOTA) deep learning generative models. We show that if the objective is optimized in data space, the model is able to produce high quality samples similar to those of GANs; and if it is optimized in latent space, the model achieves comparable performance on density estimation tasks compared to NFs trained with maximum likelihood, and achieves highest performance on small training sets.\newline
    \textbf{2)} Compared to generative models based on neural networks, this algorithm has very few hyperparameters, and the performance is insensitive to their choices. \newline
    \textbf{3)} The model training is very stable and insensitive to random seeds. In our experiments we do not observe any cases of training failures. \newline

\section{Background}

\subsection{Normalizing Flows}

Flow-based models provide a powerful framework for density estimation \citep{dinh2016density, papamakarios2017masked}
and sampling \citep{kingma2018glow}. These models map the $d$-dimensional data $x$ to $d$-dimensional latent variables $z$ through a sequence of invertible transformations $f = f_1 \circ f_2 \circ ... \circ f_L$, such that $z = f(x)$ and $z$ is mapped to a base distribution $\pi(z)$, which is normally chosen to be a standard Normal distribution. The probability density of data $x$ can be evaluated using the change of variables formula:
\begin{eqnarray}
    \label{eq:flow}
    p(x) =& \pi(f(x)) |\det \left(\frac{\partial f(x)}{\partial x}\right)| \nonumber \\
    =& \pi(f(x)) \prod_{l=1}^L |\det \left(\frac{\partial f_l(x)}{\partial x}\right)| .
\end{eqnarray}
The Jacobian determinant $\det (\frac{\partial f_l(x)}{\partial x})$ must be easy to compute for evaluating the density, and the transformation $f_l$ should be easy to invert for efficient sampling.


\subsection{Radon Transform}
\label{subsec:radon}

Let $\mathbb{L}^1(X)$ be the space of absolute integrable functions on $X$. The Radon transform $\mathcal{R}:\mathbb{L}^1(\mathbb{R}^d) \to \mathbb{L}^1(\mathbb{R} \times \mathbb{S}^{d-1})$ is defined as 
\begin{equation}
    \label{eq:R}
    (\mathcal{R}p)(t,\theta) = \int_{\mathbb{R}^d} p(x) \delta(t-\langle x,\theta \rangle)dx ,
\end{equation}
where $\mathbb{S}^{d-1}$ denotes the unit sphere $\theta_1^2+ \cdots \theta_d^2 = 1$ in $\mathbb{R}^d$, $\delta(\cdot)$ is the Dirac delta function, and $\langle \cdot,\cdot \rangle$ is the standard inner product in $\mathbb{R}^d$. For a given $\theta$, the function $(\mathcal{R}p)(\cdot,\theta):\mathbb{R} \to \mathbb{R}$ is essentially the slice (or projection) of $p(x)$ on axis $\theta$.

Note that the Radon transform $\mathcal{R}$ is invertible. Its inverse, also known as the filtered back-projection formula, is given by \citep{helgason2010integral, kolouri2019generalized}
\begin{equation}
    \label{eq:inverseRadon}
    \mathcal{R}^{-1}((\mathcal{R}p)(t,\theta)) (x) = \int_{\mathbb{S}^{d-1}} ((\mathcal{R}p)(\cdot,\theta) * h)(\langle x,\theta \rangle) d\theta,
\end{equation}
where $*$ is the convolution operator, and the convolution kernel $h$ has the Fourier transform $\hat{h}(k)=c|k|^{d-1}$. The inverse Radon transform provides a practical way to reconstruct the original function $p(x)$ using its 1D slices $(\mathcal{R}p)(\cdot,\theta)$, and is widely used in medical imaging. This inverse formula implies that if the 1D slices of two functions are the same in all axes, these two functions are identical, also known as Cram{\'e}r-Wold theorem \citep{cramer1936some}.

\subsection{Sliced and Maximum Sliced Wasserstein Distances}
\label{subsec:SWD}

The p-Wasserstein distance, $p\in [1,\infty)$, between two probability distributions $p_1$ and $p_2$ is defined as:
\begin{equation}
    \label{eq:Wp}
    W_p(p_1, p_2) = \inf_{\gamma \in \Pi(p_1, p_2)} \left(\mathbb{E}_{(x,y)\sim\gamma} \left[ \lVert x-y\rVert^p  \right]\right)^{\frac{1}{p}},
\end{equation}
where $\Pi(p_1, p_2)$ is the set of all possible joint distributions $\gamma(x,y)$ with marginalized distributions $p_1$ and $p_2$. In 1D the Wasserstein distance has a closed form solution via Cumulative Distribution Functions (CDFs), but this evaluation is intractable in high dimension. An alternative metric, the Sliced p-Wasserstein Distance (SWD), is defined as:
\begin{equation}
    \label{eq:SWp}
    SW_p(p_1, p_2) = \left( \int_{\mathbb{S}^{d-1}} W_p^p(\mathcal{R}p_1(\cdot,\theta), \mathcal{R}p_2(\cdot,\theta)) d\theta \right)^{\frac{1}{p}},
\end{equation}
where $d\theta$ is the normalized uniform measure on $\mathbb{S}^{d-1}$. The SWD can be calculated by approximating the high dimensional integral with Monte Carlo samples. However, in high dimensions a large number of projections is required to accurately estimate SWD. This motivates to use the maximum Sliced p-Wasserstein Distance (max SWD):
\begin{equation}
    \label{eq:maxSWp}
    \max {\textrm -} SW_p(p_1, p_2) = \max_{\theta \in \mathbb{S}^{d-1}} W_p(\mathcal{R}p_1(\cdot,\theta), \mathcal{R}p_2(\cdot,\theta)),
\end{equation}
which is the maximum of the Wasserstein distance of the 1D marginalized distributions of all possible directions. SWD and max SWD are both proper distances \citep{kolouri2015radon, kolouri2019generalized}.

\section{Sliced Iterative Normalizing Flows}
\label{sec:SINF}

We consider the general problem of building a NF that maps an arbitrary PDF $p_1(x)$ to another arbitrary PDF $p_2(x)$ of the same dimensionality. We first introduce our objective function in Section \ref{subsec:max-K-SWD}. The general SINF algorithm is presented in Section \ref{subsec:algorithm}. We then consider the special cases of $p_1$ and $p_2$ being standard Normal distributions in Section \ref{subsec:SIG} and Section \ref{subsec:GIS}, respectively. In Section \ref{subsec:patch} we discuss our patch-based hierarchical strategy for modeling images.

\subsection{Maximum K-sliced Wasserstein Distance}

\label{subsec:max-K-SWD}

We generalize the idea of maximum SWD and propose maximum K-Sliced p-Wasserstein Distance (max K-SWD):
\begin{eqnarray}
    \label{eq:maxKSWp}
    \max {\textrm -} K {\textrm -} SW_p(p_1, p_2) = \max_{\{\theta_1, \cdots, \theta_K\}\  \mathrm{orthonormal}} \nonumber \\
    \left(\frac{1}{K} \sum_{k=1}^K W_p^p((\mathcal{R}p_1)(\cdot,\theta_k), (\mathcal{R}p_2)(\cdot,\theta_k))\right)^{\frac{1}{p}} .
\end{eqnarray}
In this work we fix $p=2$. The proof that max K-SWD is a proper distance is in the appendix.
If $K=1$, it becomes max SWD. For $K<d$, the idea of finding the subspace with maximum distance is similar to the subspace robust Wasserstein distance \citep{paty2019subspace}. \citet{wu2019sliced} and \citet{rowland2019orthogonal} proposed to approximate SWD with orthogonal projections, similar to max K-SWD with $K=d$. max K-SWD will be used as the objective in our proposed algorithm. It defines K orthogonal axes $\{\theta_1, \cdots, \theta_K\}$ for which the marginal distributions of $p_1$ and $p_2$ are the most different, providing a natural choice for performing 1D marginal matching in our algorithm (see Section \ref{subsec:algorithm}).

The optimization in max K-SWD is performed under the constraints that $\{\theta_1, \cdots, \theta_K\}$ are orthonormal vectors, or equivalently, $A^TA=I_K$ where $A = [\theta_1, \cdots, \theta_K]$ is the matrix whose i-th column vector is $\theta_i$. Mathematically, the set of all possible $A$ matrices is called Stiefel Manifold $V_K(\mathbb{R}^d)=\{A\in \mathbb{R}^{d\times K}: A^TA=I_K\}$, and we perform the optimization on the Stiefel Manifold following \citet{tagare2011notes}. The details of the optimization is provided in the appendix, and the procedure for estimating max K-SWD and $A$ is shown in Algorithm \ref{alg:KmaxSWD}.

\begin{algorithm}[tb]
   \caption{max K-SWD}
   \label{alg:KmaxSWD}
\begin{algorithmic}
   \STATE {\bfseries Input:} $\{x_i \sim p_1\}^N_{i=1}$, $\{y_i \sim p_2\}^N_{i=1}$, $K$, order $p$, max iteration $J_{\mathrm{maxiter}}$
   \STATE Randomly initialize $A\in V_K(\mathbb{R}^d)$
   \FOR{$j=1$ {\bfseries to} $J_{\mathrm{maxiter}}$}
   \STATE Initialize $D = 0$
   \FOR{$k=1$ {\bfseries to} $K$}
   \STATE $\theta_k = A[:,k]$ 
   \STATE Compute $\hat{x}_i=\theta_k \cdot x_i$ and $\hat{y}_i=\theta_k \cdot y_i$ for each $i$ 
   \STATE Sort $\hat{x}_i$ and $\hat{x}_j$ in ascending order s.t. $\hat{x}_{i[n]} \leq \hat{x}_{i[n+1]}$ and $\hat{y}_{j[n]} \leq \hat{y}_{j[n+1]}$
   \STATE $D = D + \frac{1}{KN}\sum_{i=1}^{N}|\hat{x}_{i[n]} - \hat{y}_{j[n]}|^p$
   \ENDFOR
   \STATE $G=[-\frac{\partial D}{\partial A_{i,j}}]$, $U=[G, A]$ , $V=[A, -G]$
   \STATE Determine learning rate $\tau$ with backtracking line search $A = A - \tau U (I_{2K}+\frac{\tau}{2}V^TU)^{-1}V^TA$ 
   \IF{$A$ has converged}
   \STATE Early stop
   \ENDIF
   \ENDFOR
   \STATE {\bfseries Output:} $D^{\frac{1}{p}} \approx \max {\textrm -} K {\textrm -} SW_p$, $A \approx [\theta_1, \cdots, \theta_K]$
\end{algorithmic}
\end{algorithm}

\subsection{Proposed SINF Algorithm}

\label{subsec:algorithm}
The proposed SINF algorithm is based on iteratively minimizing the max K-SWD between the two distributions $p_1$ and $p_2$. Specifically, SINF iteratively solves for the orthogonal axes where the marginals of $p_1$ and $p_2$ are most
different (defined by max K-SWD), and then match the 1D marginalized distribution of $p_1$ to $p_2$ on those axes. This is motivated by the inverse Radon Transform (Equation \ref{eq:inverseRadon}) and Cram{\'e}r-Wold theorem, which suggest that matching the high dimensional distributions is equivalent to matching the 1D slices on all possible directions, decomposing the high dimensional problem into a series of 1D problems. See Figure \ref{fig:SINF} for an illustration of the SINF algorithm. Given a set of i.i.d. samples $X$ drawn from $p_1$, in each iteration, a set of 1D marginal transformations $\{\Psi_k\}_{k=1}^{K}$ ($K \leq d$ where $d$ is the dimensionality of the dataset) are applied to the samples on orthogonal axes $\{\theta_k\}_{k=1}^K$ to match the 1D marginalized PDF of $p_2$ along those axes. Let $A = [\theta_1, \cdots, \theta_K]$ be the matrix derived from max K-SWD optimization (algorithm \ref{alg:KmaxSWD}), that contains $K$ orthogonal axes($A^TA=I_K$). Then the transformation at iteration $l$ of samples $X_l$ can be written as
\footnote{{\bf Notation definition}: In this paper we use $l$, $k$, $j$ and $m$ to represent different iterations of the algorithm, different axes $\theta_k$, different gradient descent iterations of max K-SWD calculation (see Algorithm \ref{alg:KmaxSWD}), and different knots in the spline functions of 1D transformation, respectively.}
\begin{equation}
    \label{eq:forward}
    X_{l+1} = A_l \mathbf{\Psi}_l(A_l^T X_l)+ X_l^{\perp} ,
\end{equation}
where $X_l^{\perp} = X_l - A_lA_l^TX_l$ contains the components that are perpendicular to $\theta_1, ..., \theta_K$ and is unchanged in iteration $l$. $\mathbf{\Psi}_l=[\Psi_{l1}, \cdots, \Psi_{lK}]^T$ is the marginal mapping of each dimension of $A_l^T X_l$, and its components are required to be monotonic and differentiable. The transformation of Equation \ref{eq:forward} can be easily inverted:
\begin{equation}
    \label{eq:inverse}
    X_l = A_l \mathbf{\Psi}_l^{-1}(A_l^T X_{l+1})+ X_l^{\perp} ,
\end{equation}
where $X_l^{\perp} = X_l - A_lA_l^TX_l =  X_{l+1} - A_lA_l^TX_{l+1}$. The Jacobian determinant of the transformation is also efficient to calculate (see appendix for the proof):
\begin{equation}
    \label{eq:jacobian}
    \det(\frac{\partial X_{l+1}}{\partial X_l}) = \prod_{k=1}^K \frac{d\Psi_{lk}(x)}{dx} .
\end{equation}


\begin{figure}[t]
     \centering
      \includegraphics[width=\linewidth]{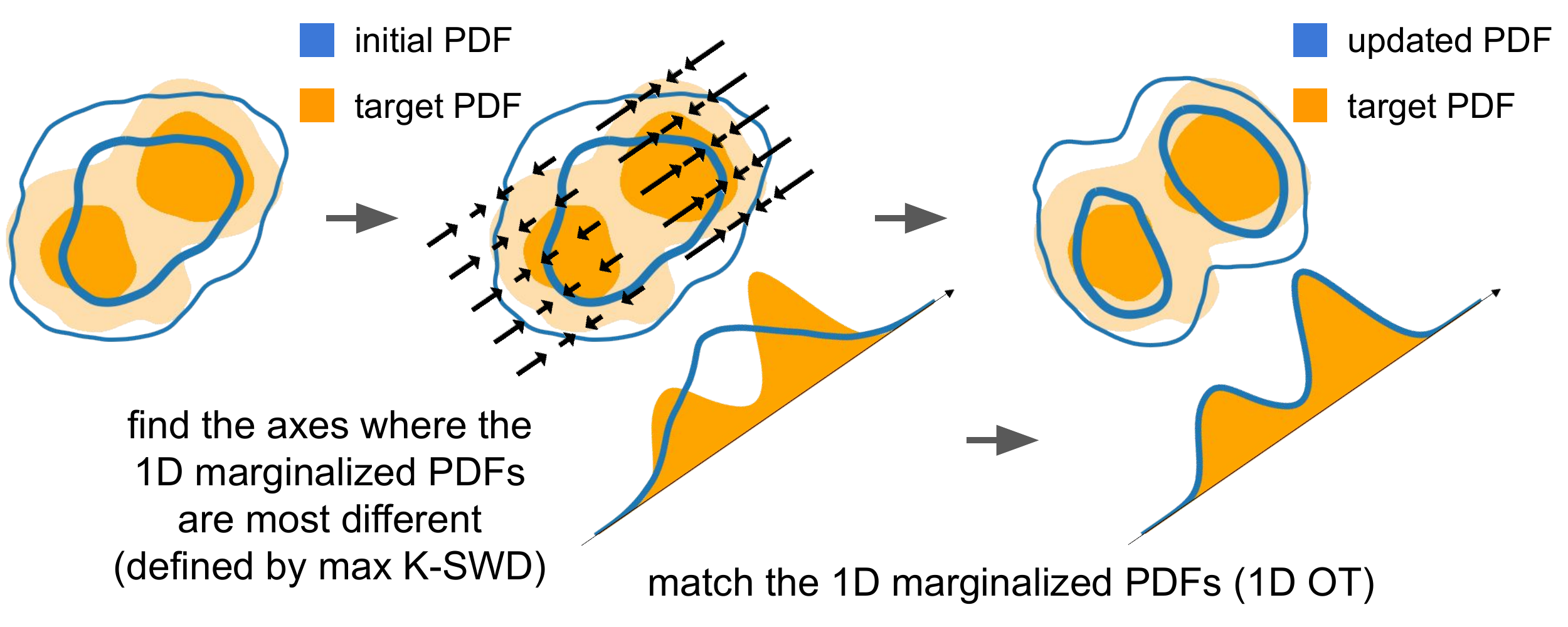}
     \caption{Illustration of 1 iteration of SINF algorithm with $K=1$.}
     \label{fig:SINF}
     \vskip -0.1in
\end{figure}

At iteration $l$, the SINF objective can be written as:
\begin{eqnarray}
    \label{eq:minimax}
    \mathcal{F}_l = \min_{\{\Psi_{l1}, \cdots, \Psi_{lK}\}}\ \max_{\{\theta_{l1}, \cdots, \theta_{lK}\}\  \mathrm{orthonormal}} \nonumber \\
    \left(\frac{1}{K} \sum_{k=1}^K W_p^p(\Psi_{lk}((\mathcal{R}p_{1,l})(\cdot,\theta_{lk})), (\mathcal{R}p_2)(\cdot,\theta_{lk}))\right)^{\frac{1}{p}} .
\end{eqnarray}
The algorithm first optimizes $\theta_{lk}$ to maximize the objective, with $\Psi_{lk}$ fixed to identical transformations (equivalent to Equation \ref{eq:maxKSWp}). Then the axes $\theta_{lk}$ are fixed and the objective is minimized with  marginal matching $\Psi_l$. The samples are updated, and the process is repeated until convergence.  

Let $p_{1,l}$ be the transformed $p_1$ at iteration $l$. The $k$th component of $\mathbf{\Psi}_l$, $\Psi_{l,k}$, maps the 1D marginalized PDF of $p_{1,l}$ to $p_2$ and has an OT solution:
\begin{equation}
    \label{eq:1D}
    \Psi_{l,k}(x) = F_k^{-1}(G_{l,k}(x)) ,
\end{equation}
where $G_{l,k}(x)=\int_{-\infty}^x (\mathcal{R}p_{1,l})(t,\theta_k)dt$ and $F_k(x)=\int_{-\infty}^x (\mathcal{R}p_2)(t,\theta_k)dt$ are the CDFs of $p_{1,l}$ and $p_2$ on axis $\theta_k$, respectively. The CDFs can be estimated using the quantiles of the samples (in SIG Section \ref{subsec:SIG}), or using Kernel Density Estimation (KDE, in GIS Section \ref{subsec:GIS}). Equation \ref{eq:1D} is monotonic and invertible. We choose to parametrize it with monotonic rational quadratic splines \citep{gregory1982piecewise, durkan2019neural}, which are continuously-differentiable and allows analytic inverse.  More details about the spline procedure are given in the appendix. We summarize SINF in Algorithm \ref{alg:NF}.

\begin{algorithm}[tb]
   \caption{Sliced Iterative Normalizing Flow}
   \label{alg:NF}
\begin{algorithmic}
   \STATE {\bfseries Input:} $\{x_i \sim p_1\}^N_{i=1}$, $\{y_i \sim p_2\}^N_{i=1}$, $K$, number of iteration $L_{\mathrm{iter}}$
   \FOR{$l=1$ {\bfseries to} $L_{\mathrm{iter}}$}
   \STATE $A_l = \textrm{max K-SWD}(x_i, y_i, K)$
   \FOR{$k=1$ {\bfseries to} $K$}
   \STATE $\theta_k = A_l[:,k]$ 
   \STATE Compute $\hat{x}_i=\theta_k \cdot x_i$ and $\hat{y}_i=\theta_k \cdot y_i$ for each $i$ 
   \STATE $\tilde{x}_m = \textrm{quantiles} (\textrm{PDF}(\hat{x}_{i}))$\\ $\tilde{y}_m = \textrm{quantiles} (\textrm{PDF}(\hat{y}_{i}))$
   \STATE $\psi_{l,k} = \textrm{RationalQuadraticSpline}(\tilde{x}_m, \tilde{y}_m)$
   \ENDFOR
   \STATE $\mathbf{\Psi}_l=[\Psi_{l1}, \cdots, \Psi_{lK}]$
   \STATE Update $x_i = x_i - A_lA_l^Tx_i + A_l \mathbf{\Psi}_l(A_l^T x_i)$ 
   \ENDFOR
\end{algorithmic}
\end{algorithm}

The proposed algorithm iteratively minimizes the max K-SWD between the transformed $p_1$ and $p_2$. The orthonomal vectors $\{\theta_1,\cdots,\theta_K\}$ specify $K$ axes along which the marginalized PDF between $p_{1,l}$ and $p_2$ are most different, thus maximizing the gain at each iteration and improving the efficiency of the algorithm. In the appendix we show empirically that the model is able to converge with two orders of magnitude fewer iterations than random axes, and it also leads to better sample quality. This is because as the dimensionality $d$ grows, the number of slices $(\mathcal{R}p)(\cdot,\theta)$ required to approximate $p(x)$ using inverse Radon formula scales as $L^{d-1}$ \citep{kolouri2015radon}, where $L$ is the number of slices needed to approximate a similar smooth 2D distribution. Therefore, if $\theta$ are randomly chosen, it takes a large number of iterations to converge in high dimensions due to the curse of dimensionality. Our objective function reduces the curse of dimensionality in high dimensions by identifying the most relevant directions first. 

$K$ is a free hyperparameter in our model. In the appendix we show empirically that the convergence of the algorithm is insensitive to the choice of $K$, and mostly depends on the total number of 1D transformations $L_{\mathrm{iter}}\times K$. 

Unlike KL-divergence, which is invariant under the flow transformations, max K-SWD is different in data space and in latent space. Therefore the direction of building the flow model is of key importance. In the next two sections we discuss two different ways of building the flow, which are good at sample generation and density estimation, respectively.

\subsection{Sliced Iterative Generator (SIG)}

\label{subsec:SIG}

For Sliced Iterative Generator (SIG) $p_1$ is a standard Normal distribution, and $p_2$ is the target distribution. The model iteratively maps the Normal distribution to the target distribution using 1D slice transformations. 
SIG directly minimizes the max K-SWD between the generated distribution and the target distribution, and is able to generate high quality samples. The properties of SIG are summarized in Table \ref{tab:comparison}.

Specifically, one first draws a set of samples from the standard Normal distribution, and then iteratively updates the samples following Equation \ref{eq:forward}. Note that in the NF framework, Equation \ref{eq:forward} is the inverse of transformation $f_l$ in Equation \ref{eq:flow}. The $\Psi$ transformation and the weight matrix $A$ are learned using Equation \ref{eq:1D} and Algorithm \ref{alg:KmaxSWD}. In Equation \ref{eq:1D} we estimate the CDFs using the quantiles of the samples.

\subsection{Gaussianizing Iterative Slicing (GIS)}

\label{subsec:GIS}

For Gaussianizing Iterative Slicing (GIS)
$p_1$ is the target distribution and $p_2$ is a standard Normal distribution. The model iteratively gaussianizes the target distribution, and the mapping is learned in the reverse direction of SIG. 
In GIS the max K-SWD between latent data and the Normal distribution is minimized, thus the model performs well in density estimation, even though its learning objective is not $\log p$. The comparison between SIG and GIS is shown in Table \ref{tab:comparison}.

We add regularization to GIS for density estimation tasks to further improve the performance and reduce overfitting. The regularization is added in the following two aspects:\newline
1) The weight matrix $A_l$ is regularized by limiting the maximum number of iterations $J_{\mathrm{maxiter}}$ (see Algorithm \ref{alg:KmaxSWD}). We set $J_{\mathrm{maxiter}} = N/d$. 
Thus for very small datasets ($N/d \to 1$) the axes of marginal transformation are almost random. This has no effect on datasets of regular size. \newline
2) The CDFs in Equation \ref{eq:1D} are estimated using KDE, and the 1D marginal transformation is regularized with:
\begin{equation}
    \label{eq:alpha}
    \tilde{\psi}_{l,k}(x) =  (1-\alpha)\psi_{l,k}(x) + \alpha x ,
\end{equation}
where $\alpha \in [0,1)$ is the regularization parameter, and $\tilde{\psi}_{l,k}$ is the regularized transformation. In the appendix we show that as $\alpha$ increases, the performance improves, but more iterations are needed to converge. Thus $\alpha$ controls the trade-off between performance and speed.




\begin{table}[tb]
  \caption{Comparison between SIG and GIS}
  \label{tab:comparison}
  \vskip 0.15in
  \centering
  \begin{tabular}{>{\centering}c|>{\centering}c|>{\centering\arraybackslash}c}
    \toprule
     Model & SIG & GIS\\ 
    \midrule\midrule
    Initial PDF $p_1$ & Gaussian & $p_{\mathrm{data}}$\\
    Final PDF $p_2$ & $p_{\mathrm{data}}$ & Gaussian \\
    \multirow{2}{*}{Training} & Iteratively maps & Iteratively maps \\
    & Gaussian to $p_{\mathrm{data}}$ & $p_{\mathrm{data}}$ to Gaussian\\
    NF structure & Yes & Yes \\
    \multirow{2}{*}{Advantage} & \multirow{2}{*}{Good samples} & Good density\\
    & & estimation\\
    \bottomrule
  \end{tabular}
  \vskip -0.1in
\end{table}

\subsection{Patch-Based Hierarchical Approach}

\label{subsec:patch}

Generally speaking, the neighboring pixels in images have stronger correlations than pixels that are far apart. This fact has been taken advantage by convolutional neural networks, which outperform Fully Connected Neural Networks (FCNNs) and have become standard building blocks in computer vision tasks. Like FCNNs, vanilla SIG and GIS make no assumption about the structure of the data and cannot model high dimensional images very well. \citet{meng2020gaussianization} propose a patch-based approach, 
which decomposes an $S \times S$ image into $p\times p$ patches, with $q\times q$ neighboring pixels in each patch ($S=pq$). In each iteration the marginalized distribution of each patch is modeled separately without considering the correlations between different patches. This approach effectively reduces the dimensionality from $S^2$ to $q^2$, at the cost of ignoring the long range correlations. Figure \ref{fig:patch} shows an illustration of the patch-based approach. 

To reduce the effects of ignoring the long range correlations, we propose a hierarchical model. In SIG, we start from modeling the entire images, which corresponds to $q=S$ and $p=1$. After some iterations the samples show correct structures, indicating the long range correlations have been modeled well. We then gradually decrease the patch size $q$ until $q=2$, which allows us to gradually focus on the smaller scales. Assuming a periodic boundary condition, we let the patches randomly shift in each iteration. If the patch size $q$ does not divide $S$, we set $p=\lfloor S/q\rfloor$ and the rest of the pixels are kept unchanged.

\begin{figure}[t]
     \centering
      \includegraphics[width=\linewidth]{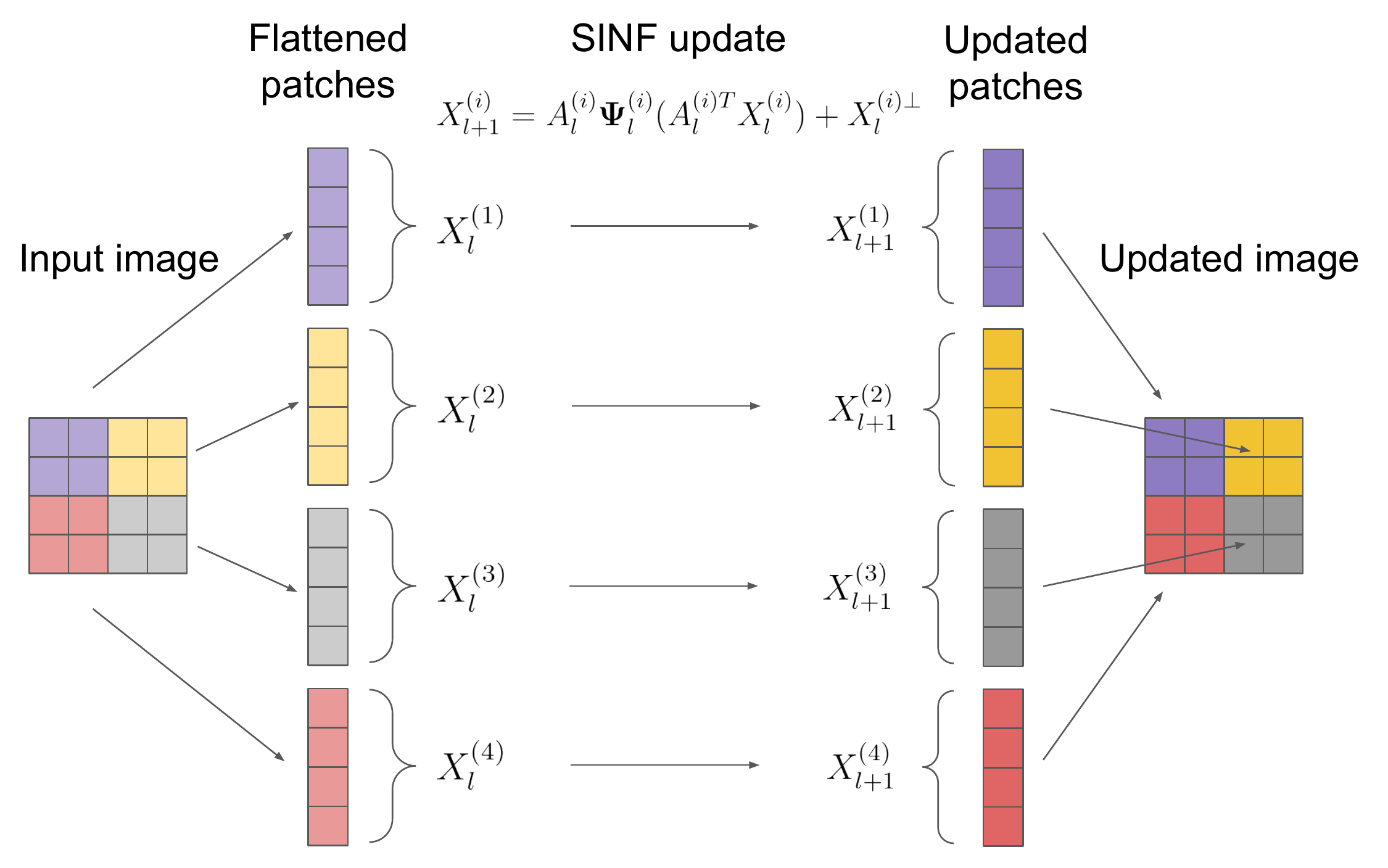}
     \caption{Illustration of the patch-based approach with $S=4$, $p=2$ and $q=2$. At each iteration, different patches are modeled separately. The patches are randomly shifted in each iteration assuming periodic boundaries.}
     \label{fig:patch}
     \vskip -0.1in
\end{figure}

\section{Related Work}

\label{sec:related}

Iterative normalizing flow models called 
RBIG \citep{chen2001gaussianization, laparra2011iterative} are simplified versions of GIS, as they are based on a succession of rotations followed by 1D marginal Gaussianizations. Iterative Distribution Transfer (IDT) \citep{pitie2007automated} is a similar algorithm but does not require the base distribution to be a Gaussian. These models do not scale well to high dimensions because they do not have a good way of choosing the axes 
(slice directions), and they are not competitive against modern NFs trained on $p(x)$ \citep{meng2020gaussianization}. 
\citet{meng2019ppmm} use a similar algorithm called Projection Pursuit Monge Map (PPMM) to construct OT maps. They propose to find the most informative axis using Projection Pursuit (PP) \cite{FreedmanPP} at each iteration, and show that PPMM works well in low-dimensional bottleneck settings ($d=8$). PP scales as $\mathcal{O}(d^3)$, which makes PPMM scaling to high dimensions prohibitive. A DL, non-iterative version of these models is Gaussianization Flow (GF) \cite{meng2020gaussianization}, which trains on $p(x)$ and achieves good density estimation results in low dimensions, but does not have good sampling properties in high dimensions. RBIG, GIS and GF have similar architectures but are trained differently. We compare their density estimation results in Section \ref{subsec:density}.

Another iterative generative model is Sliced Wasserstein Flow (SWF) \citep{liutkus2018sliced}. Similar to SIG, SWF tries to minimize the SWD between the distributions of samples and the data, and transforms this problem into solving a d dimensional PDE. The PDE is solved iteratively by doing a gradient flow in the Wasserstein space, and works well for low dimensional bottleneck features. However, in each iteration the algorithm requires evaluating an integral over the $d$ dimensional unit sphere approximated with Monte Carlo integration, which does not scale well to high dimensions. Another difference with SIG is that SWF does not have a flow structure, cannot be inverted, and does
not provide the likelihood. We compare the sample qualities between SWF and SIG in Section \ref{subsec:samples}.

SWD, max SWD and other slice-based distance 
(e.g. Cram{\'e}r-Wold distance) 
have been widely used in training generative models \citep{deshpande2018generative, deshpande2019max, wu2019sliced, kolouri2018sliced, knop2018cramer, nguyen2020improving, nguyen2020distributional, nadjahi2020statistical}. \citet{wu2019sliced} propose a differentiable SWD block composed of a rotation followed by marginalized Gaussianizations, but unlike RBIG, the rotation matrix is trained in an end-to-end DL fashion. They propose Sliced Wasserstein AutoEncoder (SWAE) by adding SWD blocks to an AE to regularize 
the latent variables, and show that its sample quality outperforms VAE and AE + RBIG. \citet{nguyen2020improving, nguyen2020distributional} generalize the max-sliced approach using parametrized distributions over projection axes. \citet{nguyen2020improving} propose Mixture Spherical Sliced Fused Gromov Wasserstein (MSSFG), 
which samples the slice axes around a few informative directions following Von Mises-Fisher distribution. They apply MSSFG to training of Deterministic Relational regularized AutoEncoder (DRAE) and name it mixture spherical DRAE (ms-DRAE). \citet{nguyen2020distributional} go further and propose Distributional Sliced Wasserstein distance (DSW), which tries to find the optimal axes distribution by parametrizing it with a neural network. They apply DSW to the training of GANs, and we will refer to their model as DSWGAN in this paper. We compare the sample qualities between SIG, SWAE, ms-DRAE, DSWGAN and other similar models in Section \ref{subsec:samples}.

\citet{grover2018flow} propose Flow-GAN using a NF as the generator of a GAN, so the model can perform likelihood evaluation, and allows both maximum likelihood and adversarial training. Similar to our work they find that adversarial training gives good samples but poor $ p(x)$, while training by maximum likelihood results in bad samples. Similar to SIG, the adversarial version of Flow-GAN minimizes the Wasserstein distance between samples and data, and has a NF structure. We compare their samples in Section \ref{subsec:samples}.

\begin{table*}[htb]
  \caption{Negative test log-likelihood for tabular datasets measured in nats, and image datasets measured in bits/dim (lower is better). }
  \label{tab:density}
  \vskip 0.15in
  \centering
  \begin{tabular}{>{\centering}c|>{\centering}c|>{\centering}c>{\centering}c>{\centering}c>{\centering}c>{\centering}c|>{\centering}c>{\centering\arraybackslash}c}
    \toprule
    & Method & POWER & GAS & HEPMASS & MINIBOONE & BSDS300 & MNIST & Fashion\\
    \midrule\midrule
    \multirow{2}{*}{iterative}
    & RBIG & 1.02 & 0.05 & 24.59 & 25.41 & -115.96 & 1.71 & 4.46\\
    & GIS (this work) & -0.32 & -10.30 & 19.00 & 14.26 & -155.75 & 1.34 & 3.22\\
    \midrule
    \multirow{6}{*}{\shortstack{maximum\\likelihood}}
    & GF & -0.57 & -10.13 & 17.59 & 10.32 & -152.82 & 1.29 & 3.35 \\
    & Real NVP & -0.17 & -8.33 & 18.71 & 13.55 & -153.28 & 1.06 & 2.85\\
    & Glow & -0.17 & -8.15 & 18.92 & 11.35 & -155.07 & 1.05 & 2.95\\
    & FFJORD & -0.46 & -8.59 & 14.92 & 10.43 & -157.40 & 0.99 & - \\
    & MAF & -0.30 & -10.08 & 17.39 & 11.68 & -156.36 & 1.89 & -\\
    & RQ-NSF (AR) & -0.66 & -13.09 & 14.01 & 9.22 & -157.31 & - & -\\
    \bottomrule
  \end{tabular}
  \vskip -0.1in
\end{table*}

\section{Experiments}

\subsection{Density Estimation $p(x)$ of Tabular Datasets}
\label{subsec:density}

\begin{figure}[t]
     \centering
     \begin{subfigure}[t]{0.49\linewidth}
         \includegraphics[width=\textwidth]{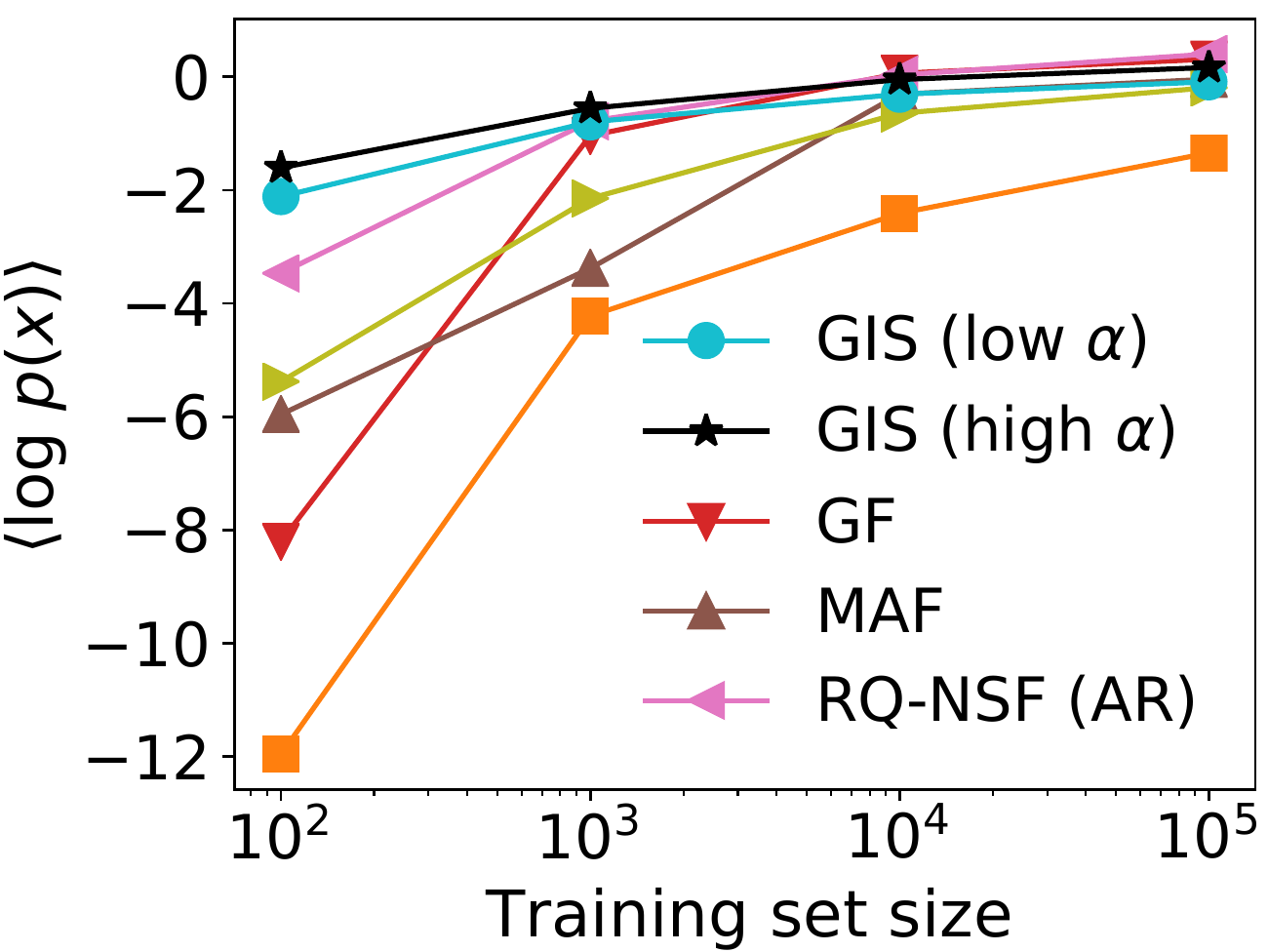}
         \caption{POWER (6D)}
     \end{subfigure}
     \begin{subfigure}[t]{0.49\linewidth}
         \includegraphics[width=\textwidth]{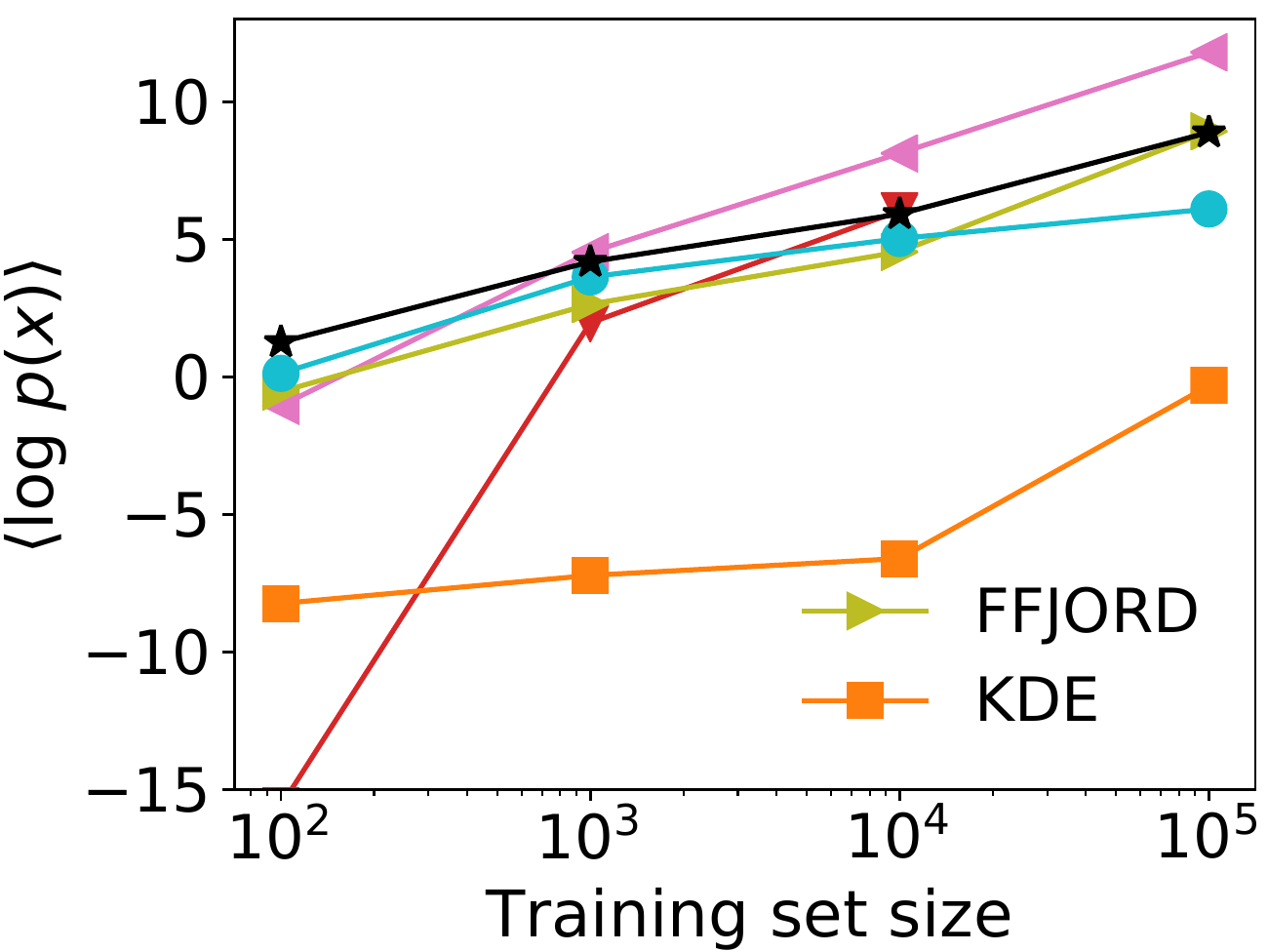}
         \caption{GAS (8D)}
     \end{subfigure}
     \begin{subfigure}[t]{0.49\linewidth}
         \includegraphics[width=\textwidth]{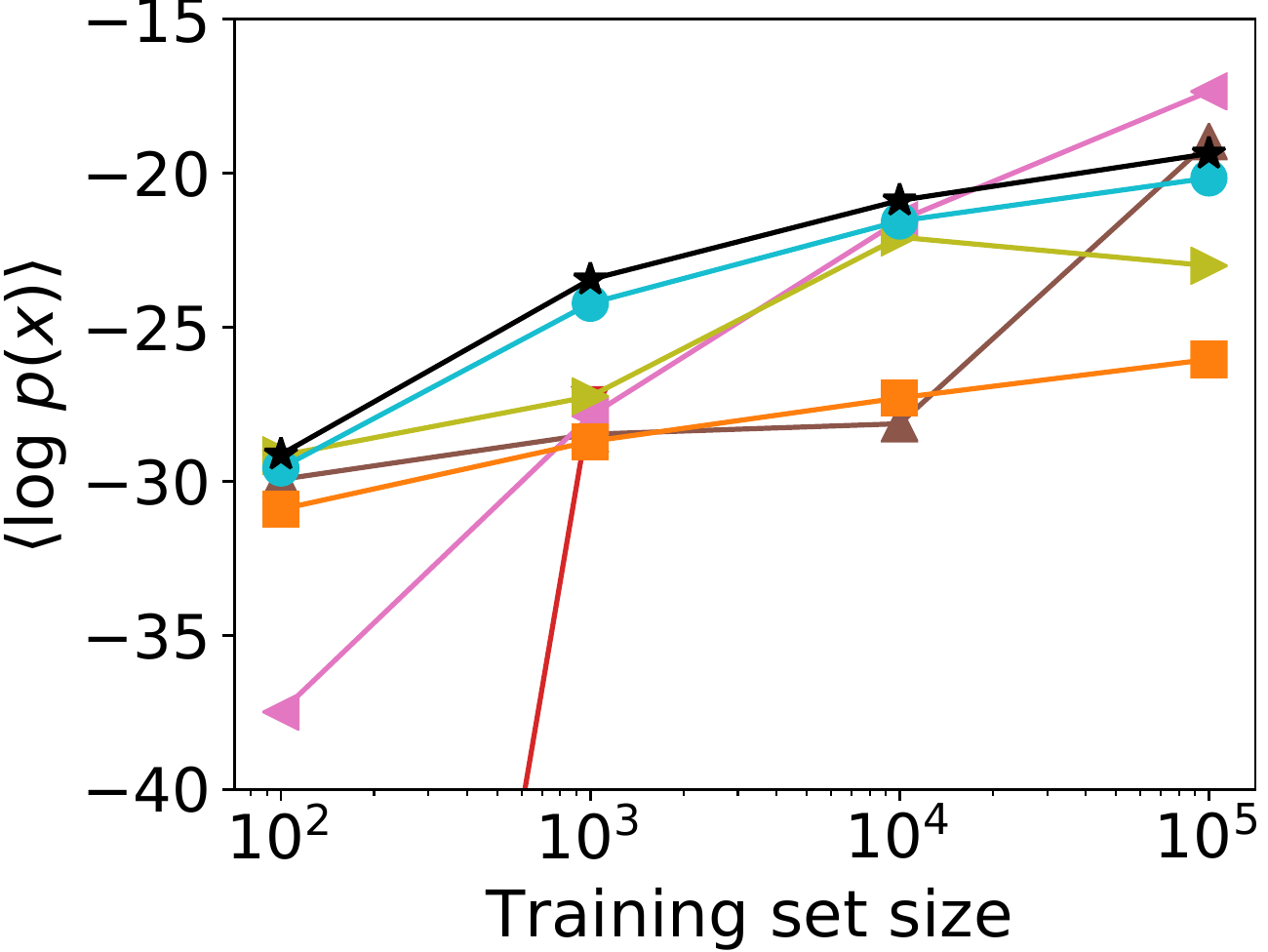}
         \caption{HEPMASS (21D)}
     \end{subfigure}
     \begin{subfigure}[t]{0.49\linewidth}
         \includegraphics[width=\textwidth]{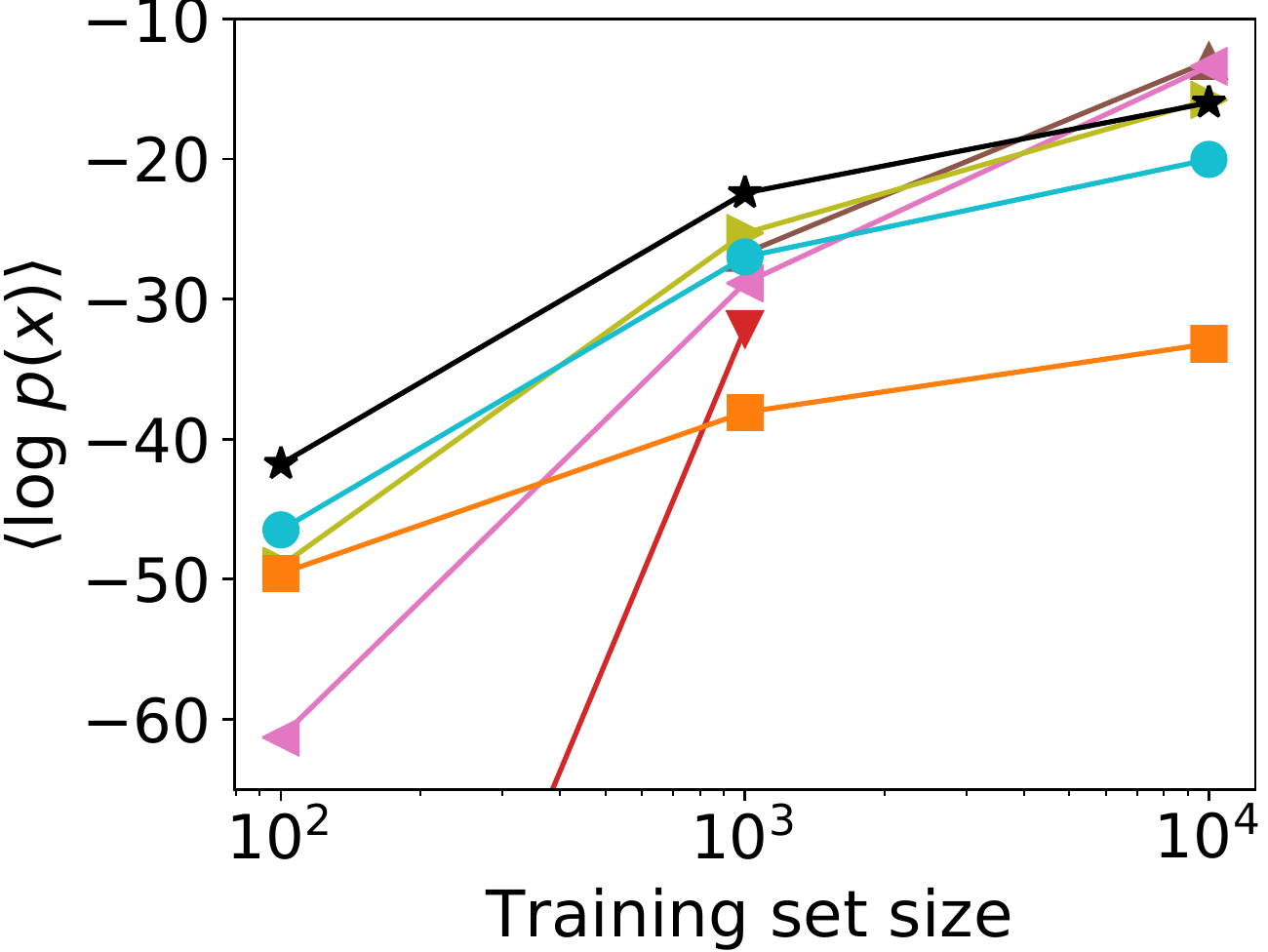}
         \caption{MINIBOONE (43D)}
     \end{subfigure}
     \begin{subfigure}[t]{0.49\linewidth}
         \includegraphics[width=\textwidth]{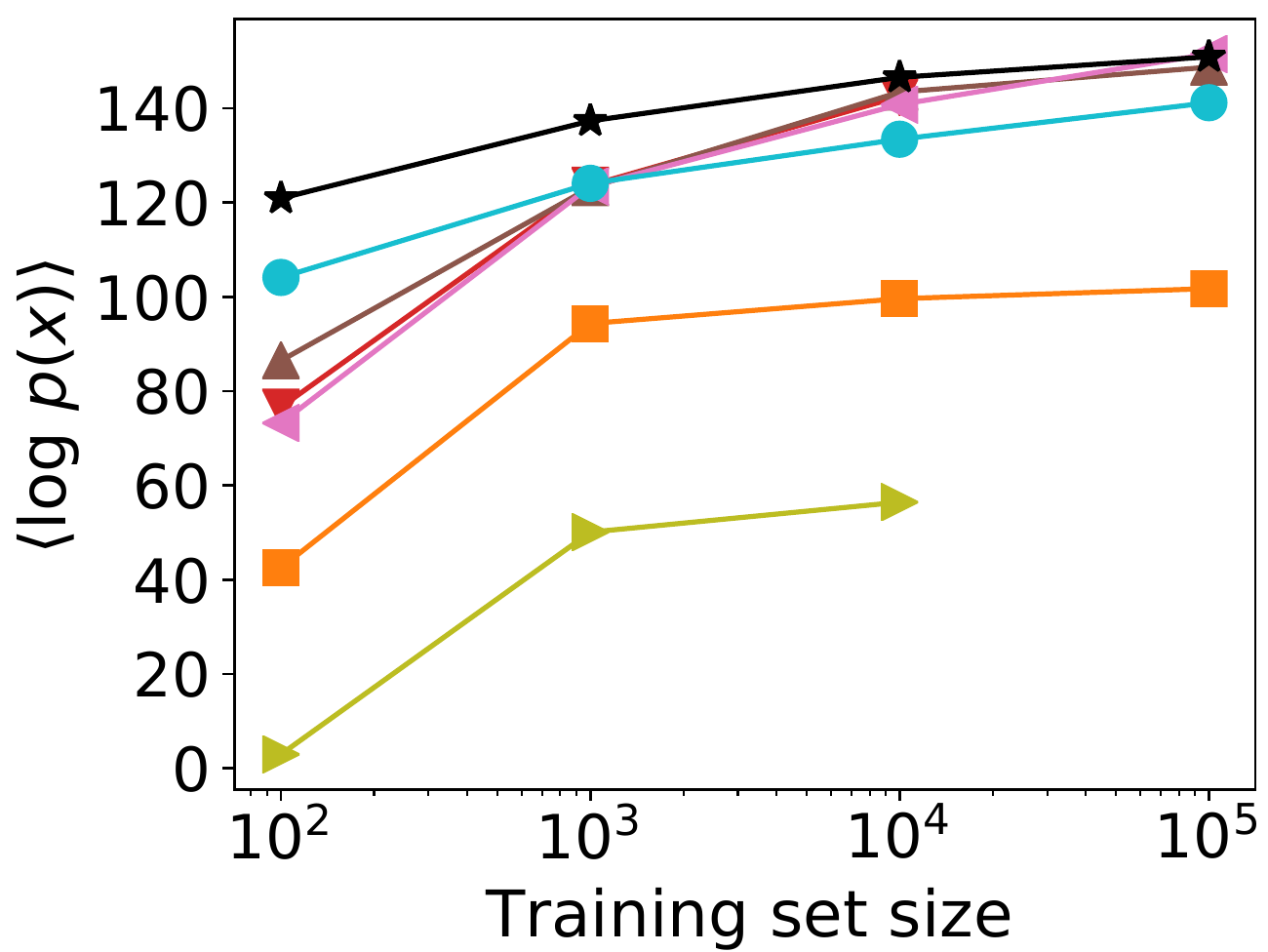}
         \caption{BSDS300 (63D)}
     \end{subfigure}
        \caption{Density estimation on small training sets. The legends in panel (a) and (b) apply to other panels as well. At 100 training data GIS 
        has the best performance in all cases.
        }
        \label{fig:density}
      \vskip -0.1in
\end{figure}

\begin{table}[htb]
  \caption{Averaged training time of different NF models on small datasets ($N_{\mathrm{train}}=100$) measured in seconds. All the models are tested on both a cpu and a K80 gpu, and the faster results are reported here (the results with * are run on gpus.). P: POWER, G: GAS, H: HEPMASS, M: MINIBOONE, B: BSDS300.}
  \label{tab:time}
  \vskip 0.15in
  \centering
  \begin{threeparttable}
  \begin{tabular}{>{\centering}c|>{\centering}c>{\centering}c>{\centering}c>{\centering}c>{\centering\arraybackslash}c}
    \toprule
    Method & P & G & H & M & B\\
    \midrule\midrule
    GIS (low $\alpha$) & $0.53$ & $1.0$ & $0.63$ & $3.5$ & $7.4$  \\
    GIS (high $\alpha$) & $6.8$ & $9.4$ & $7.3$ & $44.1$ & $69.1$\\
    GF & $113^*$ & $539^*$ & $360^*$ & $375^*$ & $122^*$ \\
    MAF & $18.4$ & -\tnote{1} & $10.2$ & -\tnote{1} & $32.1$ \\
    FFJORD & $1051$ & $1622$ & $1596$ & $499^*$ & $4548^*$ \\
    RQ-NSF (AR) & $118$ & $127$ & $55.5$ & $38.9$ & $391$\\
    \bottomrule
  \end{tabular}
  \begin{tablenotes}
  \item[1] Training failures.
  \end{tablenotes}
  \end{threeparttable}
  \vskip -0.1in
\end{table}

We perform density estimation with GIS on four UCI datasets \citep{lichman2013uci} and BSDS300  \citep{martin2001database}, as well as image datasets MNIST \citep{lecun1998gradient} and Fashion-MNIST \citep{xiao2017online}. The data preprocessing of UCI datasets and BSDS300 follows \citet{papamakarios2017masked}. In Table \ref{tab:density} we compare our results with RBIG \citep{laparra2011iterative} and GF \citep{meng2020gaussianization}. The former can be seen as GIS with random axes to apply 1D gaussianization, while the latter can be seen as training non-iterative GIS with MLE training on $p(x)$. We also list other NF models Real NVP \citep{dinh2016density}, Glow \citep{kingma2018glow}, FFJORD \citep{grathwohl2018ffjord}, MAF \citep{papamakarios2017masked} and RQ-NSF (AR)\citep{durkan2019neural} for comparison. 

We observe that RBIG performs significantly worse than current SOTA. GIS outperforms RBIG and is the first iterative algorithm that achieves comparable performance compared to maximum likelihood models. This is even more impressive given that GIS is not trained on $p(x)$, yet it outperforms GF on $p(x)$ on GAS, BSDS300 and Fashion-MNIST.

The transformation at each iteration of GIS is well defined, and the algorithm is very stable even for small training sets. To test the stability and performance we compare the density estimation results with other methods 
varying the size of the training set $N$ 
(from $10^2$ to $10^5$). 
For GIS we consider two hyperparameter settings: large regularization $\alpha$ (Equation \ref{eq:alpha}) for better $\log p$ performance, and small regularization $\alpha$ for faster training. For other NFs we use settings recommended by their original paper, and set the batch size to $\min(N/10, N_{\mathrm{batch}})$, 
where $N_{\mathrm{batch}}$ is the batch size suggested by the original paper. All the models are trained until the validation $\log p_{\mathrm{val}}$ stops improving, and for KDE the kernel width is chosen to maximize $\log p_{\mathrm{val}}$. Some non-GIS NF models diverged during training or used more memory than our GPU, and are not shown in the plot. The results in Figure \ref{fig:density} show that GIS is more stable compared to other NFs and outperforms them on small training sets. This highlights that GIS is less sensitive to hyper-parameter optimization and achieves good performance out of the box. GIS training time varies with data size, but is generally lower than other NFs for small training sets. We report the training time for $100$ training data in Table \ref{tab:time}. GIS with small regularization $\alpha$ requires significantly less time than other NFs, while
still outperforming them at 100 training size.

\subsection{Generative Modeling of Images}

\label{subsec:samples}

\begin{table*}[htb]
  \caption{FID scores on different datasets (lower is better). The errors are generally smaller than the differences.}
  \label{tab:FID}
  \vskip 0.15in
  \centering
  \begin{threeparttable}
  \begin{tabular}{>{\centering}c|>{\centering}c|>{\centering}c>{\centering}c>{\centering}c>{\centering\arraybackslash}c}
    \toprule
    & Method & MNIST & Fashion & CIFAR-10 & CelebA\\ 
    \midrule\midrule
    \multirow{2}{*}{iterative}
    & SWF & $225.1$ & $207.6$ & - & -\\
    & SIG ($T=1$) (this work) & $\mathbf{4.5}$ & $\mathbf{13.7}$ & $66.5$ & $37.3$\\\midrule
    \multirow{5}{*}{\shortstack{adversarial\\training}}
    & Flow-GAN (ADV) & $155.6$ & $216.9$ & $71.1$ & -\\
    & DSWGAN & - & - & $56.4$ & $66.9$\\
    & WGAN & $6.7$ & $21.5$ & $\mathbf{55.2}$ & $41.3$\\
    & WGAN GP & $20.3$ & $24.5$ & $55.8$ & $\mathbf{30.0}$\\
    & Best default GAN & $\sim 10$ & $\sim 32$ & $\sim 70$ & $\sim 48$ \\\midrule
    \multirow{6}{*}{\shortstack{AE based}}
    & SWAE\citep{wu2019sliced} & - & - & $107.9$ & $48.9$\\
    & SWAE\citep{kolouri2018sliced} & $29.8$ & $74.3$ & $141.9$ & $53.9$\\
    & CWAE & $23.6$ & $57.1$ & $120.0$ & $49.7$\\
    & ms-DRAE & $43.6$ & - & - & $46.0$\\
    & PAE & - & $28.0$ & - & $49.2$\\
    & two-stage VAE & $12.6$ & $29.3$ & $96.1$
    & $44.4$\\
    \bottomrule
  \end{tabular}
  \end{threeparttable}
  \vskip -0.1in
\end{table*}


We evaluate SIG as a generative model of images using the following 4 datasets: MNIST, Fashion-MNIST, CIFAR-10 \citep{krizhevsky2009learning} and Celeb-A (cropped and interpolated to $64\times 64$ resolution) \citep{liu2015faceattributes}.
In Figure \ref{fig:sample} we show samples of these four datasets. For MNIST, Fashion-MNIST and CelebA dataset we show samples from the model with reduced temperature $T=0.85$ (i.e., sampling from a Gaussian distribution with standard deviation $T=0.85$ in latent space), which slightly improves the sample quality \citep{parmar2018image, kingma2018glow}. We report the final FID score (calculated using temperature T=1) in Table \ref{tab:FID}, where we compare our results with similar algorithms SWF and Flow-Gan (ADV). We also list the FID scores of some other generative models for comparison, including models using slice-based distance SWAEs (two different models with the same name) \citep{wu2019sliced, kolouri2018sliced}, Cramer-Wold AutoEncoder (CWAE) \citep{knop2018cramer}, ms-DRAE \citep{nguyen2020improving} and DSWGAN \citep{nguyen2020distributional}, Wasserstein GAN models \citep{arjovsky2017wasserstein, gulrajani2017improved}, and other GANs and AE-based models Probablistic AutoEncoder (PAE) \citep{bohm2020probabilistic} and two-stage VAE \citep{dai2019diagnosing,xiao2019generative}.
The scores of WGAN and WGAN-GP models are taken from \citet{lucic2018gans}, who performed a large-scale testing protocol over different GAN models. The "Best default GAN" is extracted from Figure 4 of \citet{lucic2018gans}, indicating the lowest FID scores from different GAN models with the hyperparameters suggested by original authors. 
Vanilla VAEs generally do not perform as well as two stage VAE, and thus are not shown in the table.
NF models usually do not report FID scores. PAE combines AEs with NFs and we expect it to outperform most NF models in terms of sample quality due to the use of AEs. We notice that previous iterative algorithms are unable to produce good samples on high dimensional image datasets (see Table \ref{tab:FID} and Figure \ref{fig:improve} for SWF samples; see Figure 6 and 7 of \citet{meng2020gaussianization} for RBIG samples). However, SIG obtains the best FID scores on MNIST and Fashion-MNIST, while on CIFAR-10 and CelebA it also outperforms similar algorithms and AE-based models, and gets comparable results to GANs.
In Figure \ref{fig:iteration} we show samples at different iterations. 
In Figure \ref{fig:interpolation} we display interpolations between SIG samples, and the nearest training data, to verify we are not memorizing the training data.

\begin{figure}
     \centering
     \begin{subfigure}[]{\linewidth}
         \centering \includegraphics[width=\linewidth]{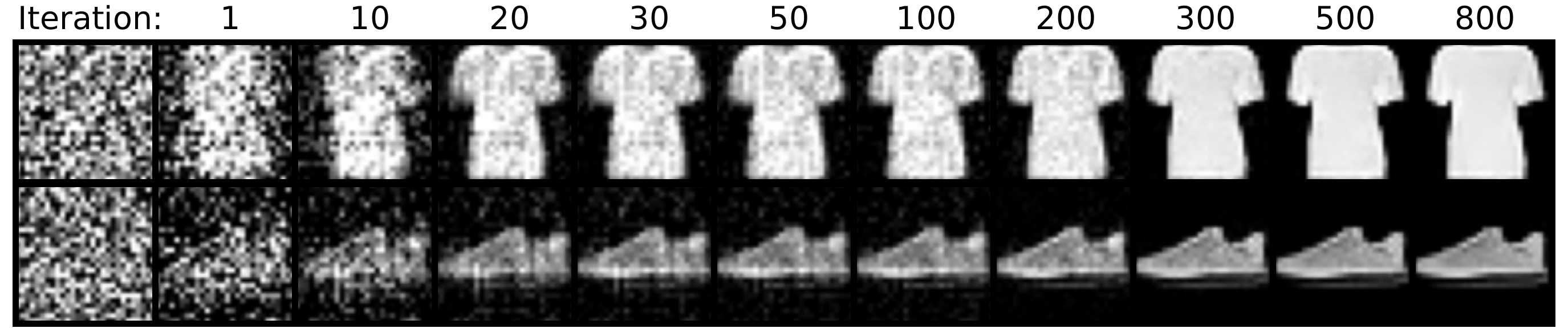}
     \end{subfigure}
     \hfill
     \begin{subfigure}[]{\linewidth}
         \centering \includegraphics[width=\linewidth]{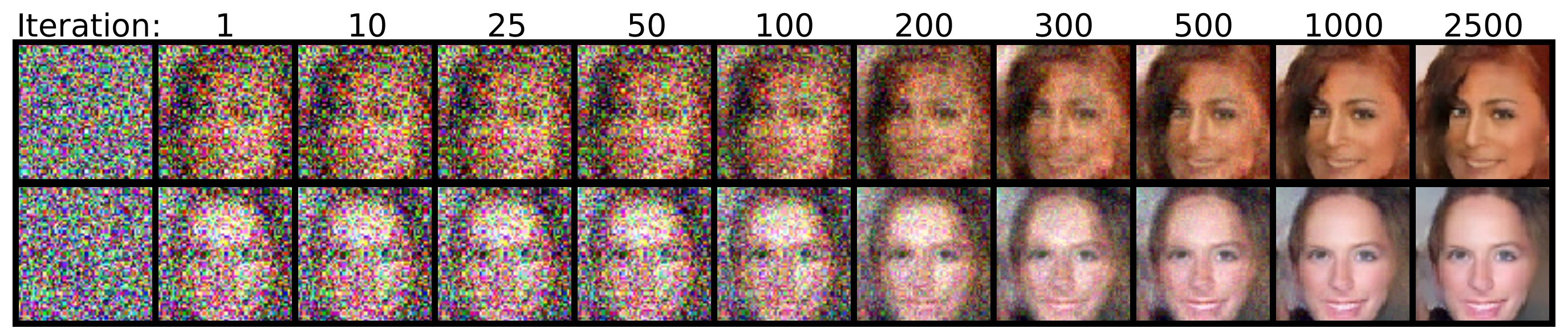}
     \end{subfigure}
     \caption{Gaussian noise (first column), Fashion-MNIST (top panel) and CelebA (bottom) samples at different iterations.}
     \label{fig:iteration}
     \vskip -0.1in
\end{figure}

\begin{figure}
     \centering
     \begin{subfigure}[b]{0.495\linewidth}
         \centering
         \includegraphics[width=\linewidth]{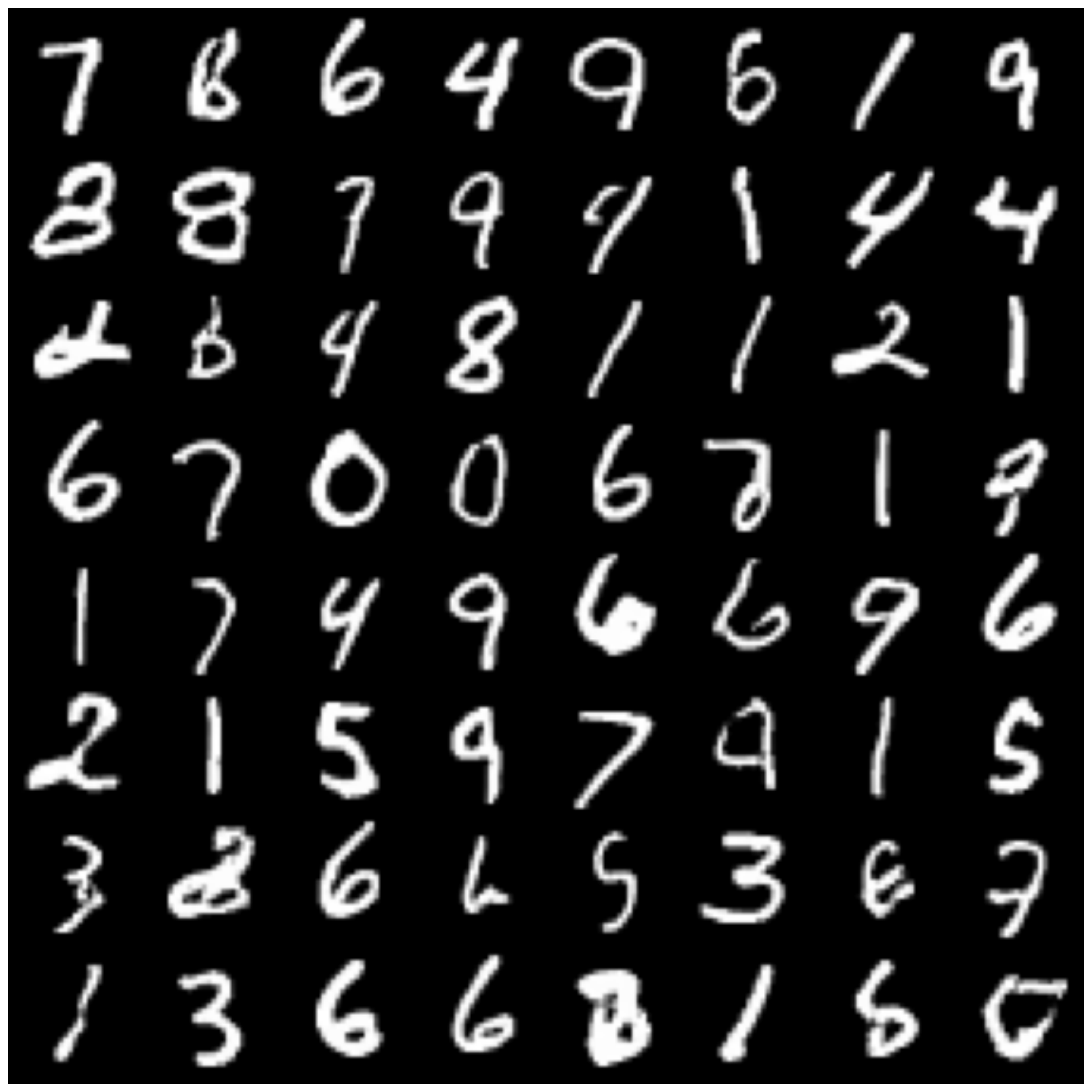}
         \caption{MNIST (T=0.85)}
     \end{subfigure}
     \hfill
     \begin{subfigure}[b]{0.495\linewidth}
         \centering
         \includegraphics[width=\linewidth]{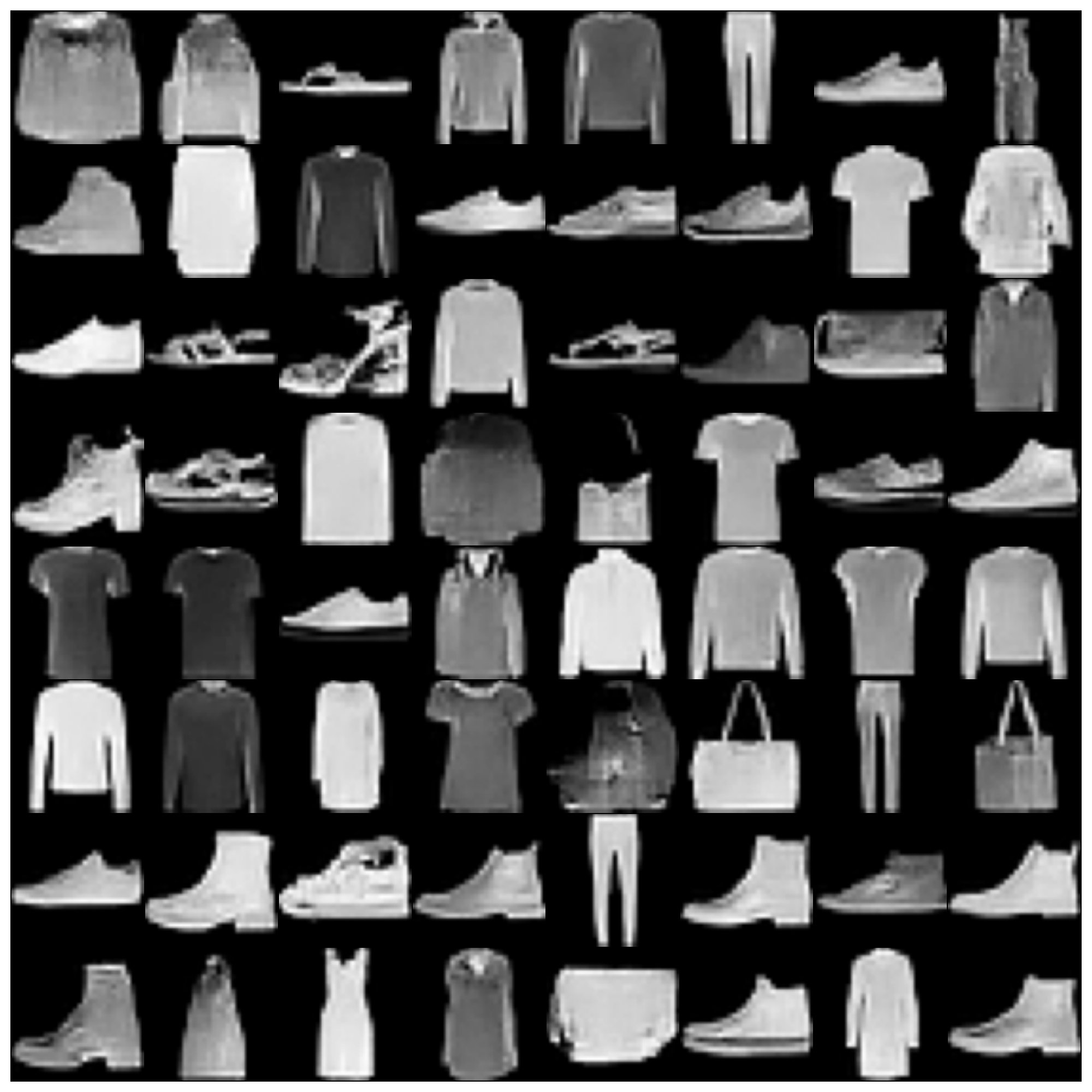}
         \caption{Fashion-MNIST (T=0.85)}
     \end{subfigure}
     \hfill
     \begin{subfigure}[b]{0.495\linewidth}
         \centering
         \includegraphics[width=\linewidth]{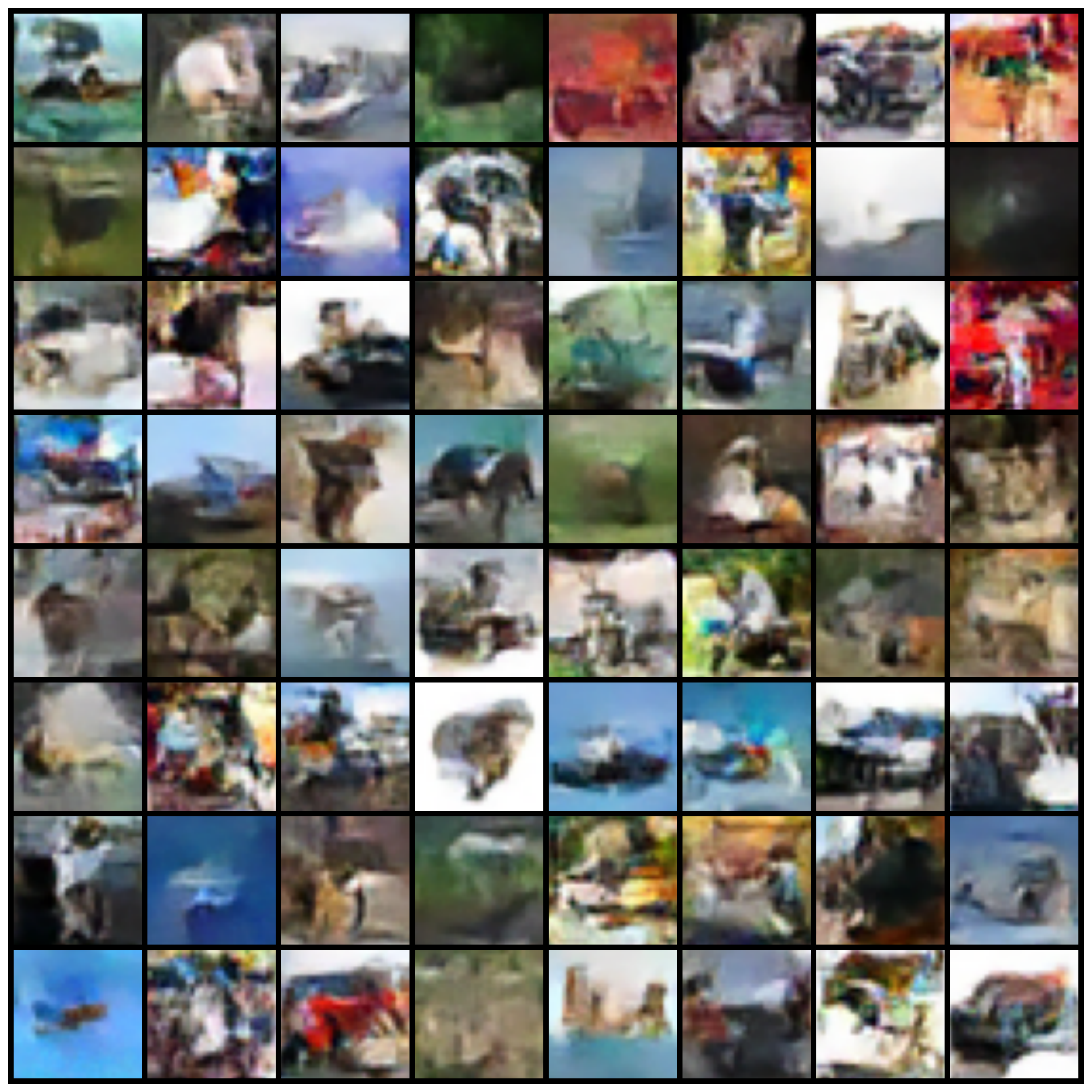}
         \caption{CIFAR-10 (T=1)}
     \end{subfigure}
     \hfill
     \begin{subfigure}[b]{0.495\linewidth}
         \centering
         \includegraphics[width=\linewidth]{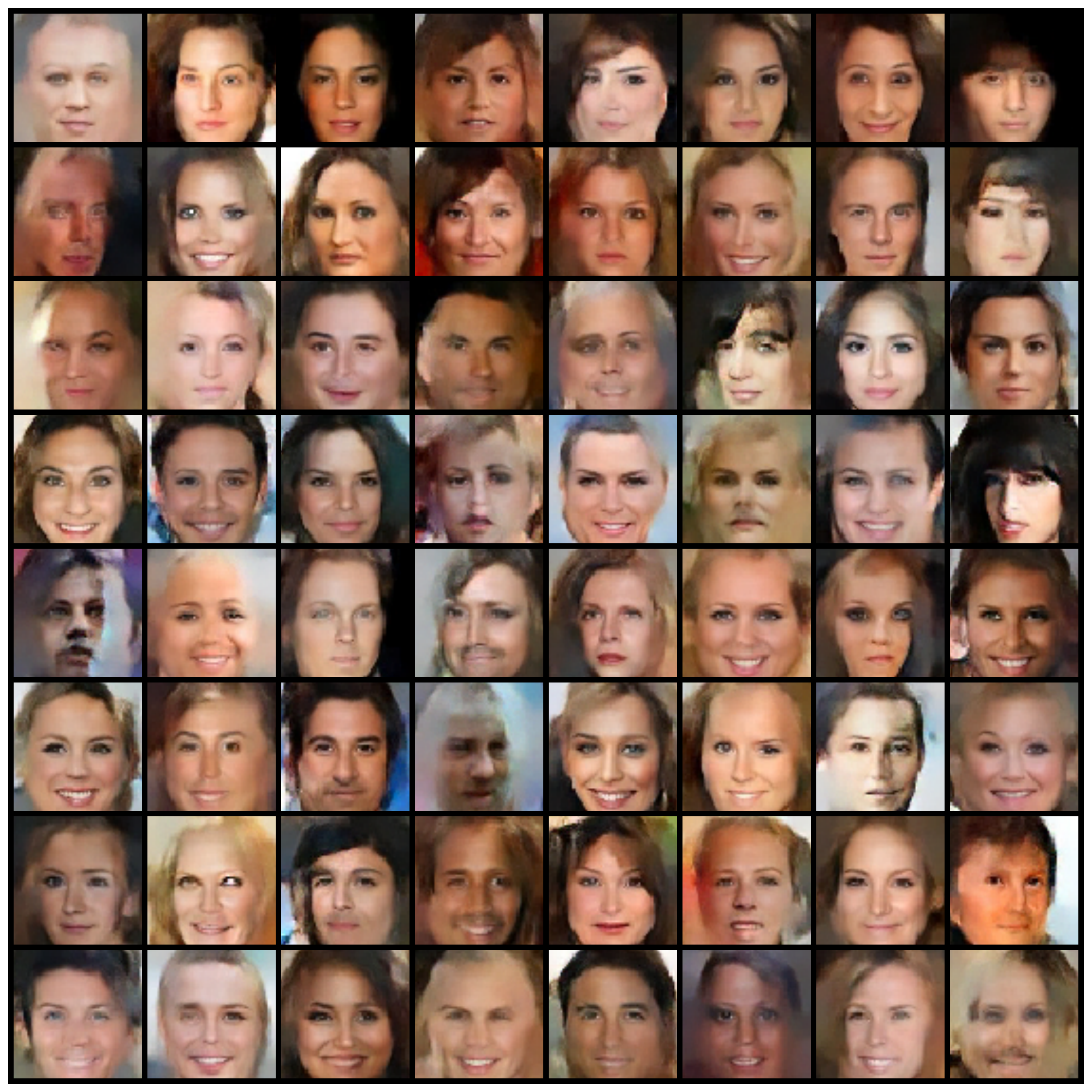}
         \caption{CelebA (T=0.85)}
     \end{subfigure}
     \caption{Random samples from SIG.}
     \label{fig:sample}
     \vskip -0.1in
\end{figure}

\begin{figure}
     \centering
     \begin{subfigure}[]{0.08\linewidth}
         \centering \includegraphics[width=\linewidth]{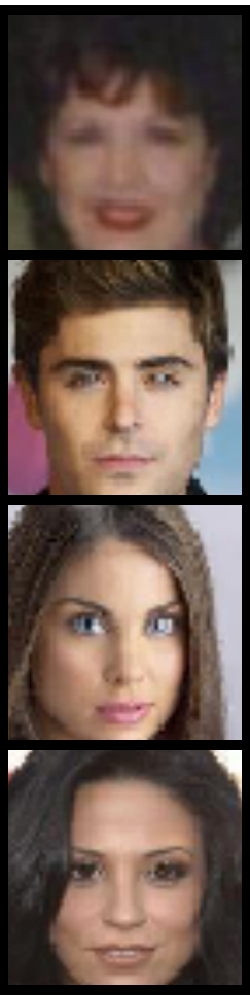}
     \end{subfigure}
     \hfill
     \begin{subfigure}[]{0.8\linewidth}
         \centering \includegraphics[width=\linewidth]{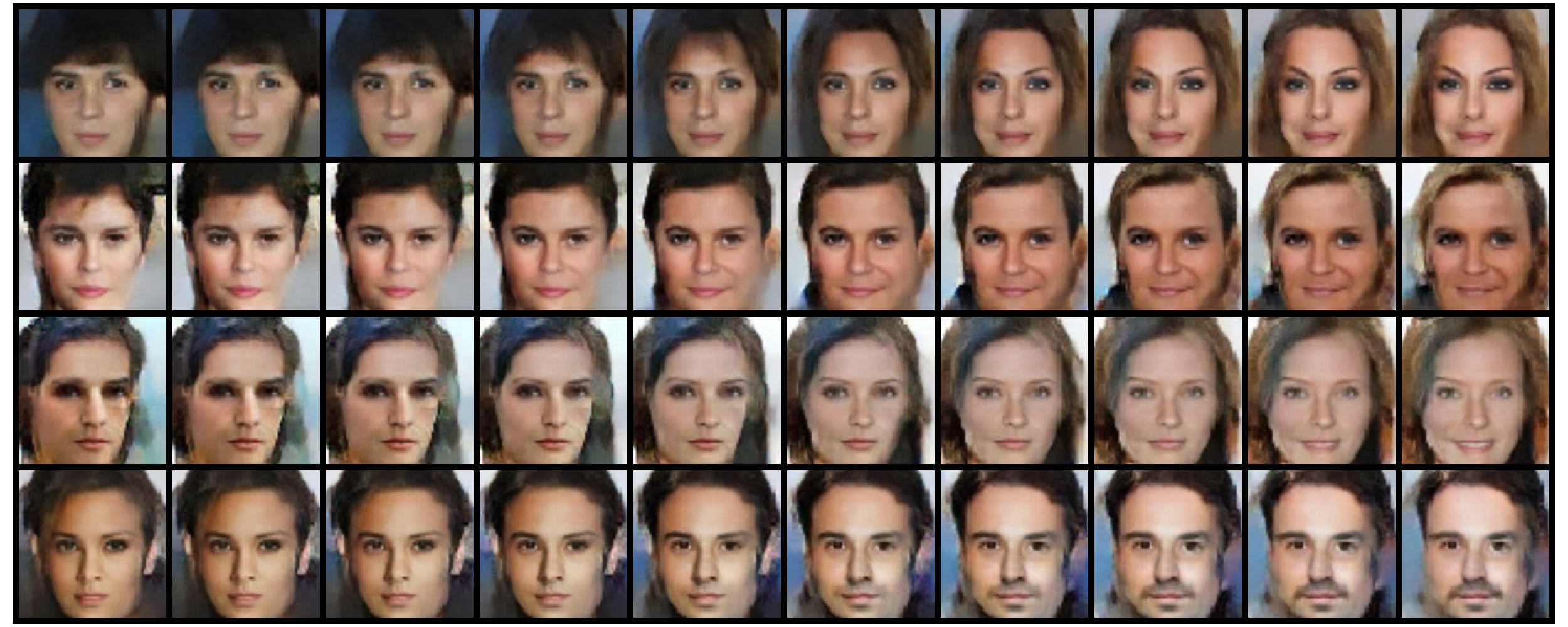}
     \end{subfigure}
     \hfill
     \begin{subfigure}[]{0.08\linewidth}
         \centering \includegraphics[width=\linewidth]{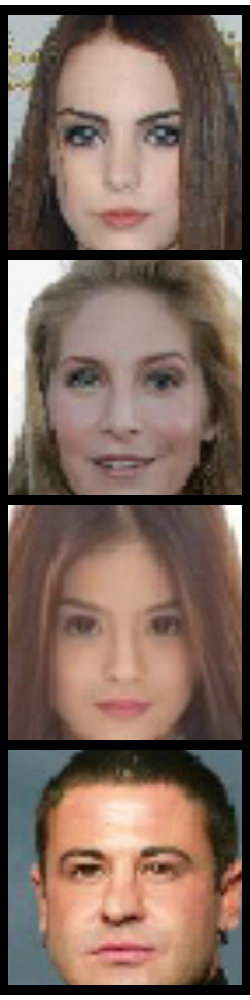}
     \end{subfigure}
     \caption{Middle: interpolations between CelebA samples from SIG. Left and right: the corresponding nearest training data.}
     \label{fig:interpolation}
     \vskip -0.1in
\end{figure}

\subsection{Improving the Samples of Other Generative Models}

\label{subsec:improve}


\begin{figure}
     \centering
     \begin{subfigure}[]{0.495\linewidth}
         \centering \includegraphics[width=\linewidth]{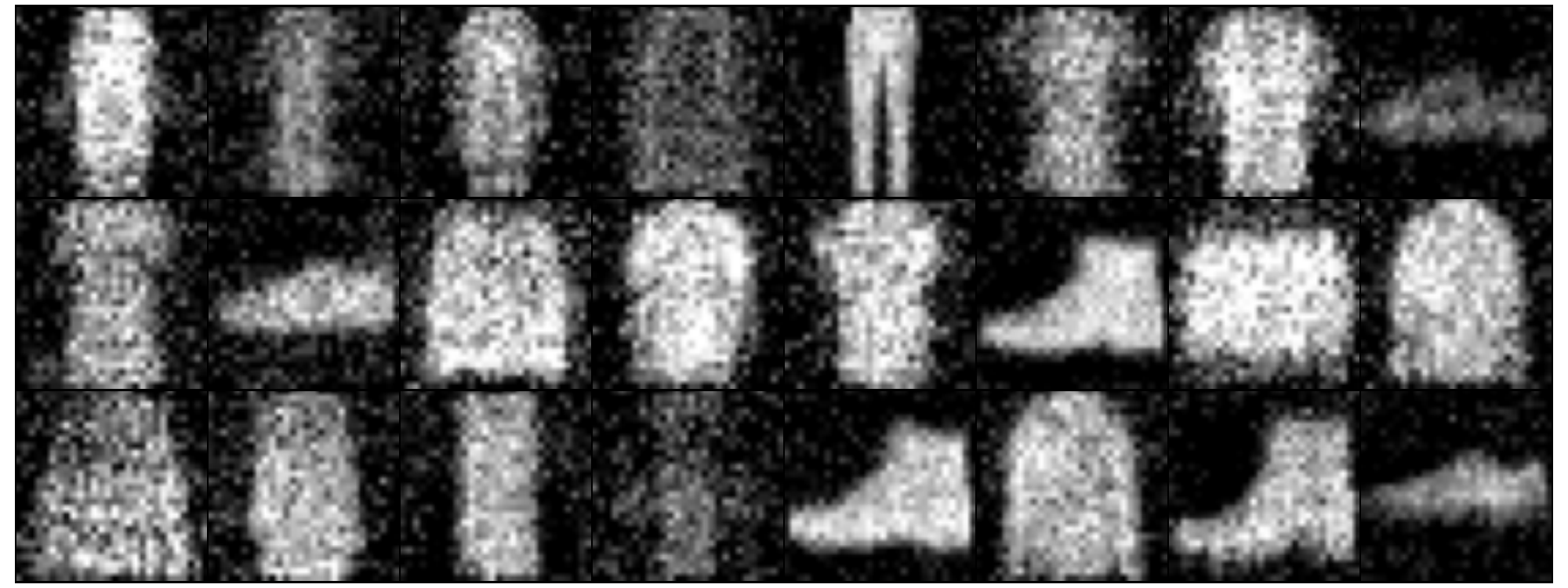}
     \end{subfigure}
     \hfill
     \begin{subfigure}[]{0.495\linewidth}
         \centering \includegraphics[width=\linewidth]{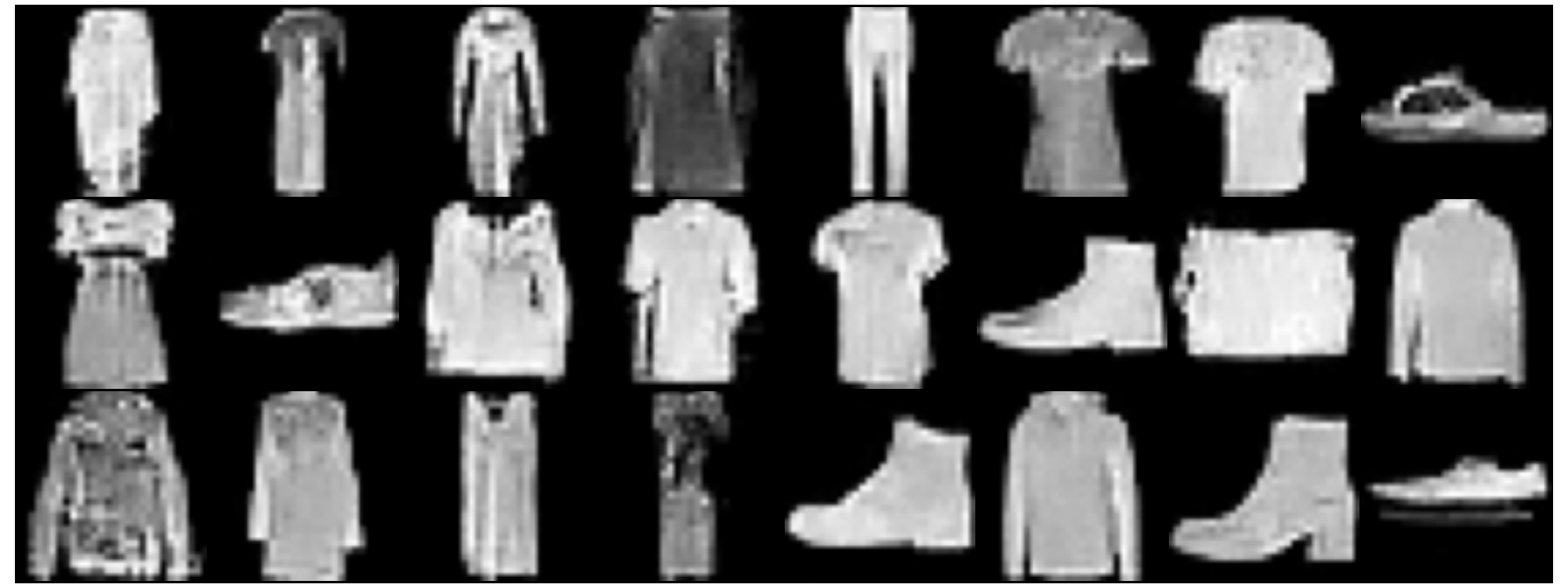}
     \end{subfigure}
     \hfill
     \begin{subfigure}[]{0.495\linewidth}
         \centering \includegraphics[width=\linewidth]{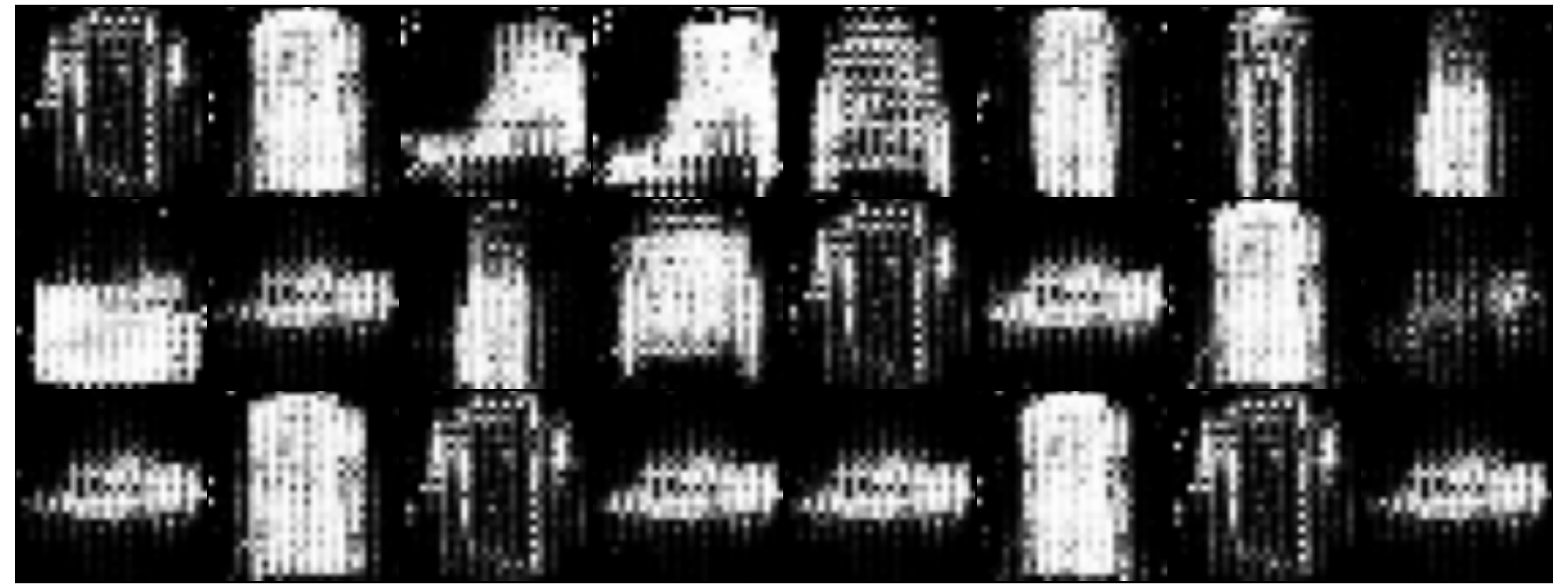}
     \end{subfigure}
     \hfill
     \begin{subfigure}[]{0.495\linewidth}
         \centering \includegraphics[width=\linewidth]{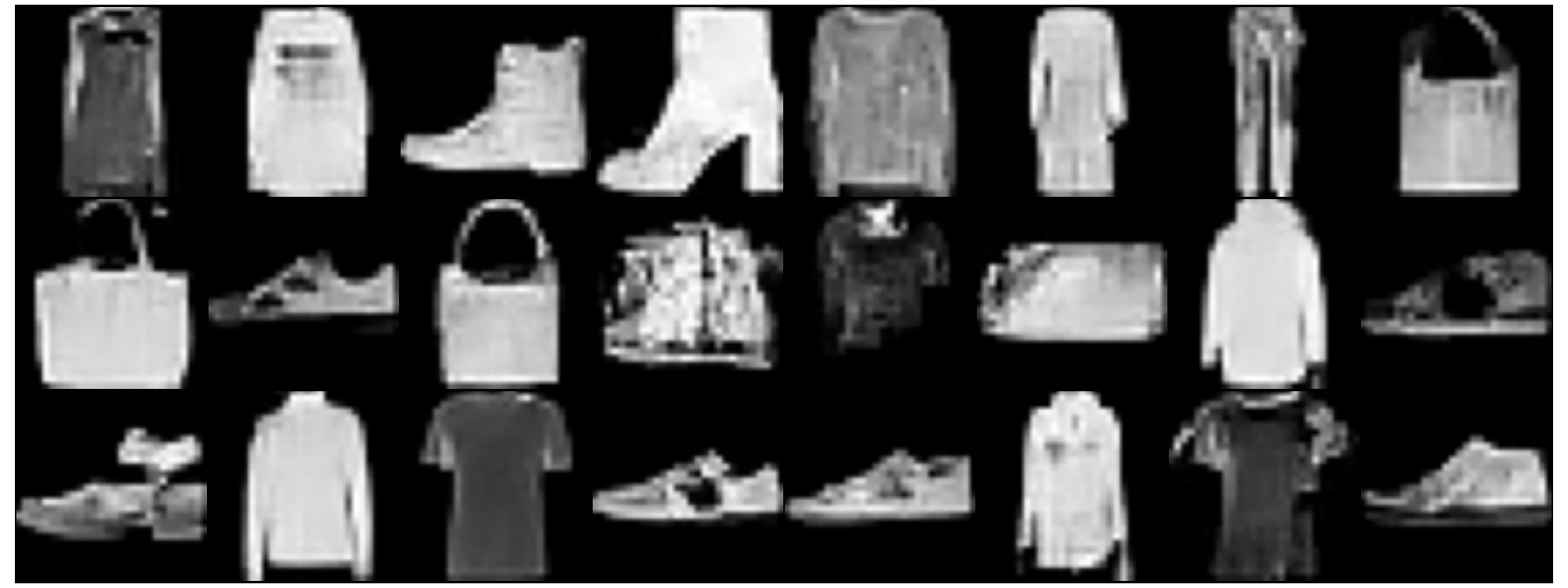}
     \end{subfigure}
     \hfill
     \begin{subfigure}[]{0.495\linewidth}
         \centering \includegraphics[width=\linewidth]{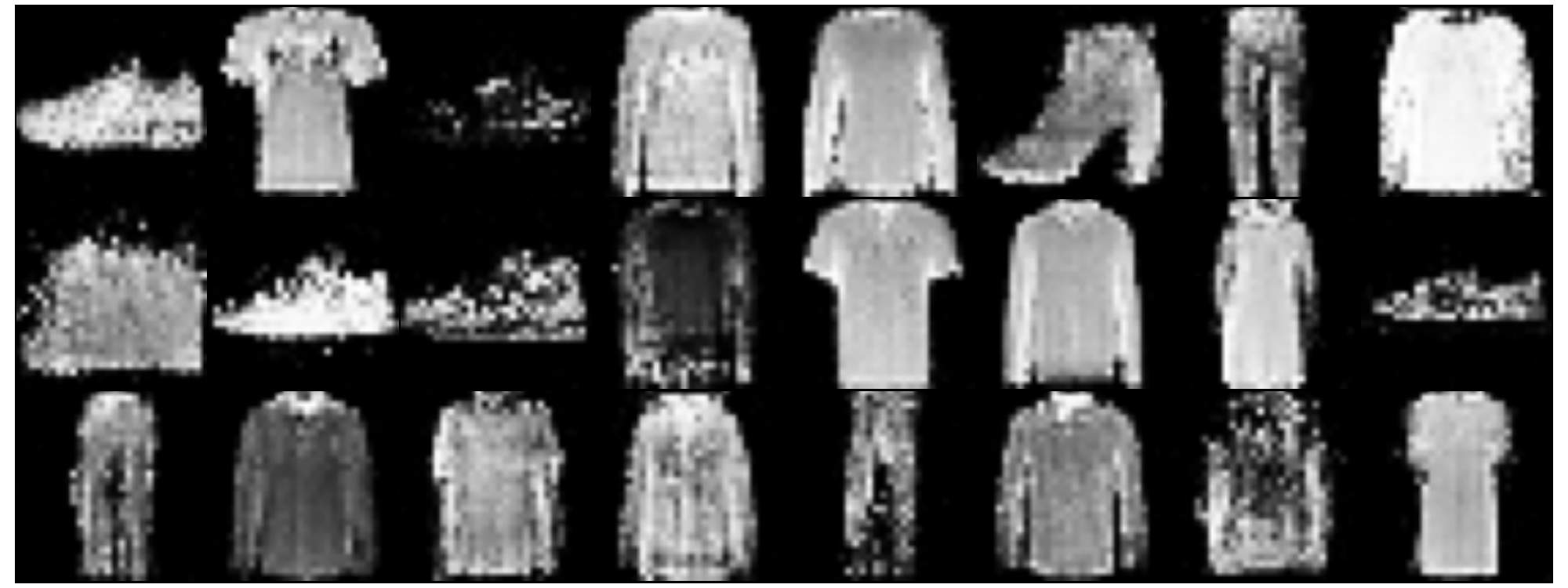}
     \end{subfigure}
     \hfill
     \begin{subfigure}[]{0.495\linewidth}
         \centering \includegraphics[width=\linewidth]{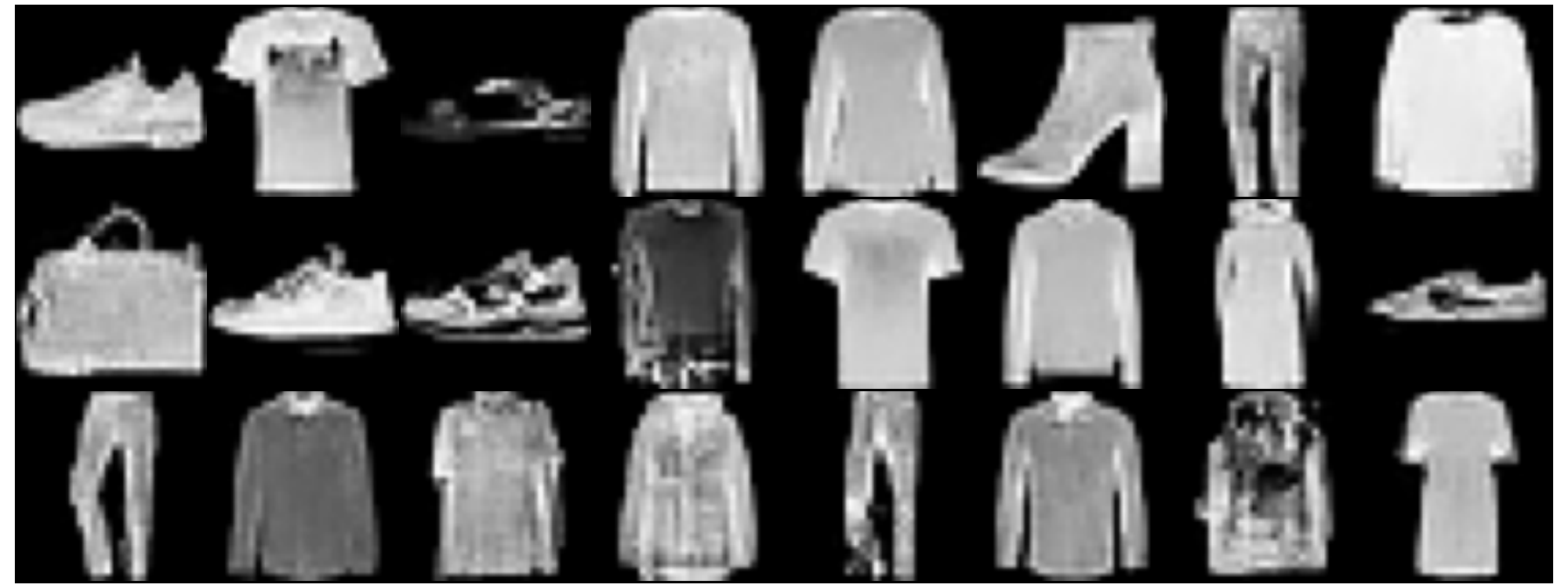}
     \end{subfigure}
    \caption{Fashion-MNIST samples before (left panel) and after SIG improvement (right panel). Top: SWF. Middle: Flow-GAN (ADV). Bottom: MAF.}
    \label{fig:improve}
    \vskip -0.1in
\end{figure}

Since SIG is able to transform any base distribution to the target distribution, it can also be used as a "Plug-and-Play" tool to improve the samples of other generative models. To demonstrate this, we train SWF, Flow-GAN(ADV) and MAF(5) on Fashion-MNIST with the default architectures in their papers, and then we apply 240 SIG iterations ($30\%$ of the total number of iterations in Section \ref{subsec:samples}) to improve the sample quality. In Figure \ref{fig:improve} we compare the samples before and after SIG improvement. Their FID scores improve from $207.6$, $216.9$ and $81.2$ to $23.9$, $21.2$ and $16.6$, respectively. These results can be further improved by adding more SIG iterations.

\subsection{Out of Distribution (OoD) Detection}

\label{subsec:ood}

\begin{table}[ht]
  \caption{OoD detection accuracy quantified by the AUROC of data $p(x)$ trained on Fashion-MNIST.  }
  \label{tab:auroc}
  \vskip 0.15in
  \centering
  \begin{tabular}{>{\centering}c|>{\centering}c>{\centering\arraybackslash}c}
  \toprule
    Method          & MNIST  & OMNIGLOT 
    \\
    \midrule\midrule
    SIG (this work)               &  \textbf{0.980} & \textbf{0.993}  
    \\
    GIS (this work) & 0.824 & 0.891 
    \\
    PixelCNN++ & 0.089 & -\\
    IWAE          &  0.423   & 0.568   
    \\
    \bottomrule
  \end{tabular}
\end{table}

OoD detection with generative models has recently attracted a lot of attention, since the $\log p$ estimates of NF and VAE have been shown to be poor OoD detectors:  different generative models can assign higher probabilities to OoD data than to In Distribution (InD) training data \citep{nalisnick2018deep}. One combination of datasets for which this has been observed is Fashion-MNIST and MNIST, where a model trained on the former assigns higher density to the latter. 


SINF does not train on the likelihood $p(x)$, which is an advantage for OoD. 
Likelihood is sensitive to the smallest variance directions \citep{ren2019likelihood}: for example, a zero variance pixel leads to an infinite $p(x)$, and noise must be added to regularize it. But zero variance directions contain little or no information on the global structure of the image. SINF objective is more sensitive to the meaningful global structures that can separate between OoD and InD. Because the patch based approach ignores the long range correlations and results in poor OoD, we use vanilla SINF without patch based approach. We train the models on F-MNIST, and then evaluate anomaly detection on test data of MNIST and OMNIGLOT \citep{lake2015human}. 
In Table~\ref{tab:auroc} we compare our results to maximum likelihood $p(x)$ models PixelCNN++\citep{salimans2017pixelcnn++, ren2019likelihood}, 
and IWAE \citep{choi2018waic}. Other models that perform well 
include VIB and WAIC \citep{choi2018waic}, which achieve 0.941, 0.943 and 0.766, 0.796, for MNIST and OMNIGLOT, respectively (below our SIG results). 
For the MNIST case \citet{ren2019likelihood} obtained 0.996 using the likelihood ratio between the model and its perturbed version, but they require fine-tuning on some additional OoD dataset, which may not be available in OoD applications. Lower dimensional latent space PAE \citep{bohm2020probabilistic} achieves 0.997 and 0.981 for MNIST and OMNIGLOT, respectively, while VAE based 
likelihood regret \citep{xiao2020likelihood} achieves
0.988 on MNIST, but requires additional (expensive)
processing. 

\section{Conclusions}

We introduce sliced iterative normalizig flow (SINF) that uses
Sliced Optimal Transport to iteratively transform data distribution to a Gaussian (GIS) or the other way around (SIG). To the best of our knowledge, SIG is the first greedy deep learning algorithm that is competitive with the SOTA generators in high dimensions, while GIS achieves comparable results on density estimation with current NF models, but is more stable, faster to train, and achieves higher $p(x)$ when trained on small training sets, even though it does not train on $p(x)$. It also achieves better OoD performance. SINF is very stable to train, has very few hyperparameters, and is very insensitive to their choice (see appendix).
SINF has deep neural network architecture, but its approach deviates significantly from the current DL paradigm, as it does not use concepts such as mini-batching, stochastic gradient descent and gradient back-propagation through deep layers. 
SINF is an existence proof that greedy DL without these ingredients can be state of the art for modern high dimensional ML applications. Such approaches thus deserve more detailed investigations that may have an impact on the theory and practice of DL.

\section*{Acknowledgements}

We thank He Jia for providing his code on Iterative Gaussianization, and for helpful discussions. We thank Vanessa Boehm and Jascha Sohl-Dickstein for comments on the manuscript. This material is based upon work supported by the National Science Foundation under Grant Numbers 1814370 and NSF 1839217, and by NASA under Grant Number 80NSSC18K1274. 

\bibliography{SIGGIS_reference}
\bibliographystyle{icml2021}


\newpage

\appendix

\twocolumn[
\icmltitle{Sliced Iterative Normalizing Flows \\
\small SUPPLEMENTARY DOCUMENT}

\icmlkeywords{Generative Models, Normalizing Flow, Optimal Transport, Sliced Wasserstein Distance}

]

\section{Proofs}
\begin{prop}
\label{prop:max-K-SWD}
Let $P_p(\Omega)$ be the set of Borel probability measures with finite p’th moment on metric space $(\Omega, d)$. The maximum K-sliced p-Wasserstein distance is a metric over $P_p(\Omega)$.
\end{prop}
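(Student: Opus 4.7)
\textbf{Proof proposal for Proposition \ref{prop:max-K-SWD}.}

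The plan is to verify the four axioms of a metric (non-negativity, symmetry, identity of indiscernibles, and triangle inequality) by reducing each to the corresponding property of the 1D $p$-Wasserstein distance along each slice direction. Non-negativity and symmetry are immediate: every term $W_p^p((\mathcal{R}p_1)(\cdot,\theta_k),(\mathcal{R}p_2)(\cdot,\theta_k))$ in the definition is non-negative and symmetric in $(p_1,p_2)$, and these properties are preserved by averaging, taking a $p$-th root, and taking a maximum over orthonormal $K$-frames. One also needs to observe that since $|\langle x,\theta\rangle|\le \lVert x\rVert$, each slice $(\mathcal{R}p_i)(\cdot,\theta)$ inherits a finite $p$-th moment from $p_i\in P_p(\Omega)$, so each $W_p$ term is finite and the construction is well defined.

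For the identity axiom, the forward direction ($p_1=p_2 \Rightarrow \max\text{-}K\text{-}SW_p=0$) is clear. For the converse, suppose $\max\text{-}K\text{-}SW_p(p_1,p_2)=0$. Then for every orthonormal frame $\{\theta_1,\dots,\theta_K\}$ the average vanishes, hence each 1D term vanishes, so $(\mathcal{R}p_1)(\cdot,\theta_k)=(\mathcal{R}p_2)(\cdot,\theta_k)$ for every $k$ (using that $W_p$ is a metric on 1D probability measures with finite $p$-th moment). Given any single direction $\theta\in\mathbb{S}^{d-1}$, one can extend $\theta$ to an orthonormal frame of size $K$, so the two slices agree along $\theta$. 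Since $\theta$ was arbitrary, the Cramér–Wold theorem (equivalently, invertibility of the Radon transform, Equation \ref{eq:inverseRadon}) gives $p_1=p_2$.

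The main step is the triangle inequality. Fix any orthonormal frame $\{\theta_1,\dots,\theta_K\}$ and any intermediate measure $p_2\in P_p(\Omega)$. Since $W_p$ is a metric on 1D measures, for each $k$
\begin{equation*}
W_p((\mathcal{R}p_1)(\cdot,\theta_k),(\mathcal{R}p_3)(\cdot,\theta_k)) \le W_p((\mathcal{R}p_1)(\cdot,\theta_k),(\mathcal{R}p_2)(\cdot,\theta_k)) + W_p((\mathcal{R}p_2)(\cdot,\theta_k),(\mathcal{R}p_3)(\cdot,\theta_k)).
\end{equation*}
Applying Minkowski's inequality for the normalised counting measure $\frac{1}{K}\sum_{k=1}^K(\cdot)$ on $\{1,\dots,K\}$ yields
\begin{equation*}
\bigl(\tfrac{1}{K}\textstyle\sum_k W_p^p(p_1,p_3;\theta_k)\bigr)^{1/p} \le \bigl(\tfrac{1}{K}\sum_k W_p^p(p_1,p_2;\theta_k)\bigr)^{1/p} + \bigl(\tfrac{1}{K}\sum_k W_p^p(p_2,p_3;\theta_k)\bigr)^{1/p}.
\end{equation*}
Each right-hand term is bounded by the corresponding max over orthonormal $K$-frames, so the right-hand side is bounded above by $\max\text{-}K\text{-}SW_p(p_1,p_2)+\max\text{-}K\text{-}SW_p(p_2,p_3)$, a quantity independent of $\{\theta_k\}$. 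Taking the supremum over orthonormal frames on the left gives the triangle inequality for $\max\text{-}K\text{-}SW_p$.

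The only subtle point I anticipate is ensuring that the Cramér–Wold argument in the identity step is used correctly: the max is over frames of size exactly $K$, so one must justify extending a single direction to a full orthonormal frame (possible since $K\le d$) to conclude equality of 1D marginals along every direction. Once this is in place, the rest reduces to routine manipulations using that $W_p$ is a 1D metric and Minkowski's inequality.
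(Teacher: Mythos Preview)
Your proposal is correct and matches the paper's proof almost exactly for symmetry and the triangle inequality: both use the 1D triangle inequality for $W_p$ followed by Minkowski's inequality, then bound each of the two resulting averages by the supremum over orthonormal $K$-frames.

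For the identity of indiscernibles you take a slightly more direct route than the paper. The paper sandwiches $\max\text{-}K\text{-}SW_p$ between $(1/K)^{1/p}\,\max\text{-}SW_p$ and $\max\text{-}SW_p$ (Equations \ref{eq:max-K-SWD-bound1} and \ref{eq:max-K-SWD-bound2}) and then invokes the known fact that $\max\text{-}SW_p$ is a metric. You instead argue directly: vanishing of the maximum forces every frame average to vanish by non-negativity, hence every single-direction $W_p$ vanishes (by extending $\theta$ to a $K$-frame), and Cram\'er--Wold finishes. Your argument is self-contained and avoids citing $\max\text{-}SW_p$ as a black box, while the paper's sandwich yields as a byproduct the quantitative equivalence of the two distances, which is of independent interest. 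Your added remark that slicing preserves finite $p$-th moments (so each $W_p$ term is finite) is a welcome technical point that the paper leaves implicit.
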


\begin{proof}
We firstly prove the triangle inequality.
Let $\mu_1$, $\mu_2$ and $\mu_3$ be probability measures in $P_p(\Omega)$ with probability density function $p_1$, $p_2$ and $p_3$, respectively. 
Let $\{\theta_1^*, \cdots, \theta_K^*\} = \argmax_{\{\theta_1, \cdots, \theta_K\}\  \mathrm{orthonormal}}\\ \left(\frac{1}{K} \sum_{k=1}^K W_p^p((\mathcal{R}p_1)(\cdot,\theta_k), (\mathcal{R}p_3)(\cdot,\theta_k))\right)^{\frac{1}{p}}$; then
\begin{equation}
\begin{aligned}
    &\max {\textrm -} K {\textrm -} SW_p(p_1, p_3)\\
    =& \max_{\{\theta_1, \cdots, \theta_K\}\  \mathrm{orthonormal}} \\ &\left(\frac{1}{K} \sum_{k=1}^K W_p^p((\mathcal{R}p_1)(\cdot,\theta_k), (\mathcal{R}p_3)(\cdot,\theta_k))\right)^{\frac{1}{p}} \\
    =& \left(\frac{1}{K} \sum_{k=1}^K W_p^p((\mathcal{R}p_1)(\cdot,\theta_k^*), (\mathcal{R}p_3)(\cdot,\theta_k^*))\right)^{\frac{1}{p}} \\
    \leq & \left(\frac{1}{K} \sum_{k=1}^K [W_p((\mathcal{R}p_1)(\cdot,\theta_k^*), (\mathcal{R}p_2)(\cdot,\theta_k^*)) \right.\\
    & \left. + W_p((\mathcal{R}p_2)(\cdot,\theta_k^*), (\mathcal{R}p_3)(\cdot,\theta_k^*)) \vphantom{\sum_{k=1}^K}]^p\right) ^{\frac{1}{p}}\\
    \leq & \left(\frac{1}{K} \sum_{k=1}^K W_p^p((\mathcal{R}p_1)(\cdot,\theta_k^*), (\mathcal{R}p_2)(\cdot,\theta_k^*)) \right)^{\frac{1}{p}}\\
    & + \left(\frac{1}{K} \sum_{k=1}^K W_p^p((\mathcal{R}p_2)(\cdot,\theta_k^*), (\mathcal{R}p_3)(\cdot,\theta_k^*)) \right)^{\frac{1}{p}}\\
    \leq & \max_{\{\theta_1, \cdots, \theta_K\}\  \mathrm{orthonormal}}\\
    & \left(\frac{1}{K} \sum_{k=1}^K W_p^p((\mathcal{R}p_1)(\cdot,\theta_k), (\mathcal{R}p_2)(\cdot,\theta_k)) \right)^{\frac{1}{p}}\\
    & + \max_{\{\theta_1, \cdots, \theta_K\}\  \mathrm{orthonormal}}\\
    & \left(\frac{1}{K} \sum_{k=1}^K W_p^p((\mathcal{R}p_2)(\cdot,\theta_k), (\mathcal{R}p_3)(\cdot,\theta_k)) \right)^{\frac{1}{p}}\\
    = & \max {\textrm -} K {\textrm -} SW_p(p_1, p_2) + \max {\textrm -} K {\textrm -} SW_p(p_2, p_3) ,
\end{aligned}
\end{equation}
where the first inequality comes from the triangle inequality of Wasserstein distance, and the second inequality follows Minkowski inequality. Therefore $\max {\textrm -} K {\textrm -} SW_p$ satisfies the triangle inequality.

Now we prove the identity of indiscernibles. For any probability measures $\mu_1$ and $\mu_2$ in $P_p(\Omega)$ with probability density function $p_1$ and $p_2$, let \\
$\hat{\theta} = \argmax_{\theta\in \mathbb{S}^{d-1}} W_p((\mathcal{R}p_1)(\cdot,\theta), (\mathcal{R}p_2)(\cdot,\theta))$, and\\
$\{\theta_1^*, \cdots, \theta_K^*\} = \argmax_{\{\theta_1, \cdots, \theta_K\}\  \mathrm{orthonormal}}\\ \left(\frac{1}{K} \sum_{k=1}^K W_p^p((\mathcal{R}p_1)(\cdot,\theta_k), (\mathcal{R}p_2)(\cdot,\theta_k))\right)^{\frac{1}{p}}$, we have
\begin{equation}
\begin{aligned}
\label{eq:max-K-SWD-bound1}
    &\max {\textrm -} K {\textrm -} SW_p(p_1, p_2)\\
    =& \left(\frac{1}{K} \sum_{k=1}^K W_p^p((\mathcal{R}p_1)(\cdot,\theta_k^*), (\mathcal{R}p_2)(\cdot,\theta_k^*))\right)^{\frac{1}{p}} \\
    \leq & \left(\frac{1}{K} \sum_{k=1}^K W_p^p((\mathcal{R}p_1)(\cdot,\hat{\theta}), (\mathcal{R}p_2)(\cdot,\hat{\theta}))\right)^{\frac{1}{p}} \\
    = & W_p((\mathcal{R}p_1)(\cdot,\hat{\theta}), (\mathcal{R}p_2)(\cdot,\hat{\theta}))\\
    = & \max {\textrm -} SW_p(p_1, p_2) .
\end{aligned}
\end{equation}
On the other hand, let $\{\hat{\theta}, \tilde{\theta}_2, \cdots, \tilde{\theta}_K\}$ be a set of orthonormal vectors in $\mathbb{S}^{d-1}$ where the first element is $\hat{\theta}$, we have 
\begin{equation}
\begin{aligned}
\label{eq:max-K-SWD-bound2}
    &\max {\textrm -} K {\textrm -} SW_p(p_1, p_2)\\
    =& \left(\frac{1}{K} \sum_{k=1}^K W_p^p((\mathcal{R}p_1)(\cdot,\theta_k^*), (\mathcal{R}p_2)(\cdot,\theta_k^*))\right)^{\frac{1}{p}} \\
    \geq & \left(\frac{1}{K} \vphantom{\sum_{k=1}^K} W_p^p((\mathcal{R}p_1)(\cdot,\hat{\theta}), (\mathcal{R}p_2)(\cdot,\hat{\theta}))\right. \\
    & + \left. \frac{1}{K} \sum_{k=2}^K W_p^p((\mathcal{R}p_1)(\cdot,\tilde{\theta}_k), (\mathcal{R}p_2)(\cdot,\tilde{\theta}_k))\right)^{\frac{1}{p}} \\
    \geq & \left(\frac{1}{K}  W_p^p((\mathcal{R}p_1)(\cdot,\hat{\theta}), (\mathcal{R}p_2)(\cdot,\hat{\theta}))\right)^{\frac{1}{p}} \\
    = & (\frac{1}{K})^{\frac{1}{p}}  \max {\textrm -} SW_p(p_1, p_2) .
\end{aligned}
\end{equation}
Therefore we have $(\frac{1}{K})^{\frac{1}{p}}  \max {\textrm -} SW_p(p_1, p_2) \leq \\ \max {\textrm -} K {\textrm -} SW_p(p_1, p_2) \leq \max {\textrm -} SW_p(p_1, p_2)$. Thus $\max {\textrm -} K {\textrm -} SW_p(p_1, p_2)=0 \Leftrightarrow \max {\textrm -} SW_p(p_1, p_2)=0 \Leftrightarrow \mu_1 = \mu_2$, where we use the non-negativity and identity of indiscernibles of $\max {\textrm -} SW_p$.

Finally, the symmetry of $\max {\textrm -} K {\textrm -} SW_p$ can be proven using the fact that p-Wasserstein distance is symmetric:
\begin{equation}
\begin{aligned}
    &\max {\textrm -} K {\textrm -} SW_p(p_1, p_2)\\
    =& \left(\frac{1}{K} \sum_{k=1}^K W_p^p((\mathcal{R}p_1)(\cdot,\theta_k^*), (\mathcal{R}p_2)(\cdot,\theta_k^*))\right)^{\frac{1}{p}} \\
    =& \left(\frac{1}{K} \sum_{k=1}^K W_p^p((\mathcal{R}p_2)(\cdot,\theta_k^*), (\mathcal{R}p_1)(\cdot,\theta_k^*))\right)^{\frac{1}{p}} \\
    =& \max {\textrm -} K {\textrm -} SW_p(p_2, p_1).
\end{aligned}
\end{equation}
\end{proof}

\begin{proof}[Proof of Equation \ref{eq:jacobian}]
Let $\{\theta_1, \cdots, \theta_K, \cdots, \theta_d\}$ be a set of orthonormal basis in $\mathcal{R}^d$ where the first $K$ vectors are $\theta_1, \cdots, \theta_K$, respectively. Let $R_l=[\theta_1, \cdots, \theta_d]$ be an orthogonal matrix whose i-th column vector is $\theta_i$, $U_l=[\theta_{K+1}, \cdots, \theta_d]$. Since $A_l=[\theta_1, \cdots, \theta_K]$, we have $R_l=[A_l,U_l]$ (the concatenation of columns of $A$ and $U$). Let $\mathbf{I}^{d-K}=[\mathrm{id}_1, \cdots, \mathrm{id}_{d-K}]^T$ be a marginal transformation that consists of $d-K$ 1D identity transformation,  $\hat{\mathbf{\Psi}}_l=
\begin{bmatrix}
\mathbf{\Psi}_{l}\\
\mathbf{I}^{d-K}
\end{bmatrix}$, we have
\begin{equation}
\begin{aligned}
    X_{l+1} =& A_l \mathbf{\Psi}_{l}(A_l^TX_l) + X_l - A_lA_l^TX_l\\
    =& A_l \mathbf{\Psi}_{l}(A_l^TX_l) + R_lR_l^TX_l - A_lA_l^TX_l\\
    =& A_l \mathbf{\Psi}_{l}(A_l^TX_l) + [A_l,U_l] 
    \begin{bmatrix}
    A_l^T\\
    U_l^T
    \end{bmatrix}X_l - A_lA_l^TX_l\\
    =& A_l \mathbf{\Psi}_{l}(A_l^TX_l) + U_lU_l^TX_l\\
    =& A_l \mathbf{\Psi}_{l}(A_l^TX_l) + U_l \mathbf{I}^{d-K}(U_l^TX_l)\\
    =& [A_l, U_l]
    \begin{bmatrix}
    \mathbf{\Psi}_{l}\\
    I^{d-K}
    \end{bmatrix}
    \left([A_l, U_l]^TX_l\right)\\
    =& R_l\hat{\mathbf{\Psi}}_l(R_l^TX_l).
\end{aligned}
\end{equation}

Since $R_l$ is an orthogonal matrix with determinant $\pm1$, and the Jacobian of the marginal transformation $\hat{\mathbf{\Psi}}_l$ is diagonal, the Jacobian determinant of the above equation can be written as
\begin{equation}
\begin{aligned}
 \det(\frac{\partial X_{l+1}}{\partial X_l}) =& \prod_{k=1}^K \frac{d\Psi_{lk}(x)}{dx} \cdot \prod_{k=1}^{d-K} \frac{d(\mathrm{id}_k(x))}{dx}\\
 =& \prod_{k=1}^K \frac{d\Psi_{lk}(x)}{dx} .
\end{aligned}
\end{equation}
\end{proof}

\section{Monotonic Rational Quadratic Spline}
\label{sec:RQspline}

Monotonic Rational Quadratic Splines \citep{gregory1982piecewise, durkan2019neural} approximate the function in each bin with the quotient of two quadratic polynomials. They are monotonic, contineously differentiable, and can be inverted analytically. The splines are parametrized by the coordinates and derivatives of $M$ knots: $\{(x_m, y_m, y_m')\}_{m=1}^{M}$, with $x_{m+1}>x_m$, $y_{m+1}>y_m$ and $y_{m}'>0$. Given these parameters, the function in bin $m$ can be written as \citep{durkan2019neural}
\begin{equation}
    y = y_m  + (y_{m+1}-y_m)\frac{s_m\xi^2+y_m'\xi(1-\xi)}{s_m+\sigma_m\xi(1-\xi)},
\end{equation}
where $s_m=(y_{m+1}-y_m)/(x_{m+1}-x_m)$, $\sigma_m=y_{m+1}'+y_m'-2s_m$ and $\xi=(x-x_m)/(x_{m+1}-x_m)$. The derivative is given by
\begin{equation}
    \frac{dy}{dx} =  \frac{s_m^2[y_{m+1}'\xi^2+2s_m\xi(1-\xi)+y_m'(1-\xi)^2]}{[s_m+\sigma_m\xi(1-\xi)]^2}.
\end{equation}
Finally, the inverse can be calculated with
\begin{equation}
    x = x_m + (x_{m+1}-x_m)\frac{2c}{-b-\sqrt{b^2-4ac}},
\end{equation}
where $a = (s_m-y_m') + \zeta\sigma_m$, $b = y_m' - \zeta\sigma_m$, $c = -s_m\zeta$ and $\zeta = (y-y_m)/(y_{m+1}-y_m)$. The derivation of these formula can be found in Appendix A of \citet{durkan2019neural}.

In our algorithm the coordinates of the knots are determined by the quantiles of the marginalized PDF (see Algorithm \ref{alg:NF}). The derivative $y_m'$ $(1<m<M)$ is determined by fitting a local quadratic polynomial to the neighboring knots $(x_{m-1}, y_{m-1})$, $(x_m, y_m)$, and $(x_{m+1}, y_{m+1})$:
\begin{equation}
    y_m' = \frac{s_{m-1}(x_{m+1}-x_m)+s_{m}(x_{m}-x_{m-1})}{x_{m+1}-x_{m-1}}.
\end{equation}
The function outside $[x_1, x_M]$ is linearly extrapolated with slopes $y_1'$ and $y_M'$. In SIG, $y_1'$ and $y_M'$ are fixed to 1, while in GIS they are fitted to the samples that fall outside $[x_1, x_M]$.

We use $M=400$ knots in SIG to interpolate each $\Psi_{l,k}$, while in GIS we allow $M$ to vary between $[50,200]$, depending on the dataset size $M=\sqrt{N_{\mathrm{train}}}$. The performance is insensitive to these choices, as long as $M$ is large enough to fully characterize the 1D transformation $\Psi_{l,k}$.

\section{Optimization on the Stiefel Manifold}
\label{sec:stiefel}

The calculation of max K-SWD (Equation \ref{eq:maxKSWp}) requires optimization under the constraints that $\{\theta_1, \cdots, \theta_K\}$ are orthonormal vectors, or equivalently, $A^TA=I_K$ where $A = [\theta_1, \cdots, \theta_K]$ is the matrix whose i-th column vector is $\theta_i$. As suggested by \citet{tagare2011notes}, the optimization of matrix $A$ can be performed by doing gradient ascent on the Stiefel Manifold:
\begin{equation}
    \label{eq:Cayley1}
    A_{(j+1)} = \left( I_d + \frac{\tau}{2}B_{(j)}\right)^{-1}\left( I_d - \frac{\tau}{2}B_{(j)}\right)A_{(j)} ,
\end{equation}
where $A_{(j)}$ is the weight matrix at gradient descent iteration $j$ (which is different from the iteration $l$ of the algorithm), $\tau$ is the learning rate, which is determined by backtracking line search, $B=GA^T-AG^T$, and $G$ is the negative gradient matrix $G=[-\frac{\partial \mathcal{F}}{\partial A_{p,q}}] \in \mathbb{R}^{d\times K}$. Equation \ref{eq:Cayley1} has the properties that $A_{(j+1)} \in V_K(\mathbb{R}^d)$, and that the tangent vector $\frac{dA_{(j+1)}}{d\tau}|_{\tau=0}$ is the projection of gradient $[\frac{\partial \mathcal{F}}{\partial A_{p,q}}]$ onto $T_{A_{(j)}}(V_K(\mathbb{R}^d))$ (the tangent space of $V_K(\mathbb{R}^d)$ at $A_{(j)}$) under the canonical inner product \citep{tagare2011notes}. 

However, Equation \ref{eq:Cayley1} requires the inverse of a $d\times d$ matrix, which is computationally expensive in high dimensions. The matrix inverse can be simplified using the Sherman-Morrison-Woodbury formula, which results in the following equation \citep{tagare2011notes}:
\begin{equation}
    \label{eq:Cayley2}
    A_{(j+1)} = A_{(j)} - \tau U_{(j)} (I_{2K}+\frac{\tau}{2}V_{(j)}^TU_{(j)})^{-1}V_{(j)}^TA_{(j)} ,
\end{equation}
where $U=[G, A]$ (the concatenation of columns of $G$ and $A$) and $V=[A, -G]$. Equation \ref{eq:Cayley2} only involves the inverse of a $2K\times2K$ matrix. For high dimensional data (e.g. images), we use a relatively small $K$ to avoid the inverse of large matrices. A large $K$ leads to faster training, but one would converge to similar results with a small $K$ using more iterations. In Appendix \ref{sec:ablation} we show that the convergence is insensitive to the choice of $K$.

\section{Hyperparameter study and ablation analysis}
\label{sec:ablation}
Here we study the sensitivity of SINF to hyperparameters and perform 
ablation analyses. 
\subsection{Hyperparameter $K$, objective function, and patch based approach}


\begin{figure}[htb]
     \centering
     \begin{subfigure}[b]{\linewidth}
         \centering
         \includegraphics[width=\linewidth]{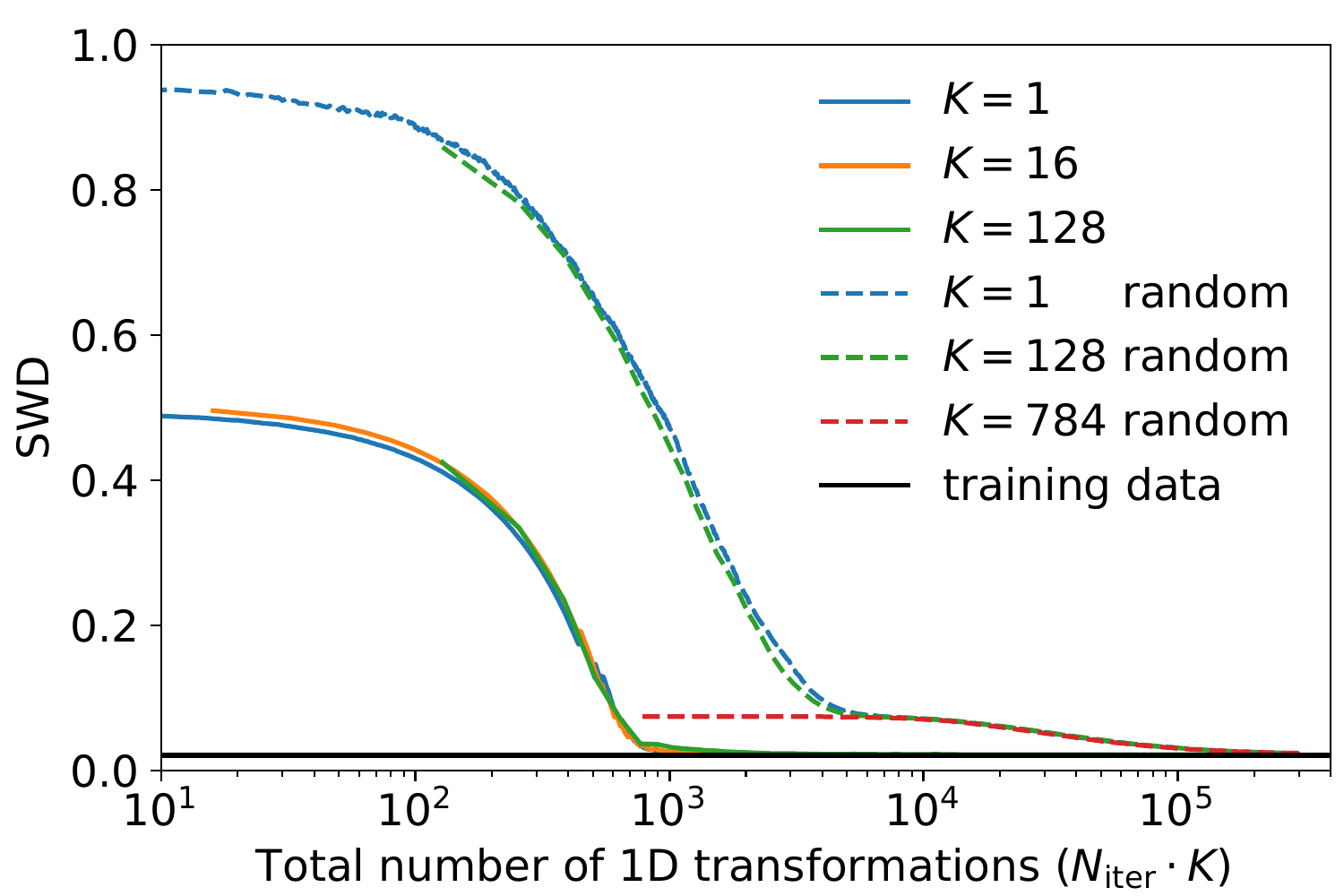}
     \end{subfigure}
     \hfill
     \begin{subfigure}[b]{\linewidth}
         \centering
         \includegraphics[width=\linewidth]{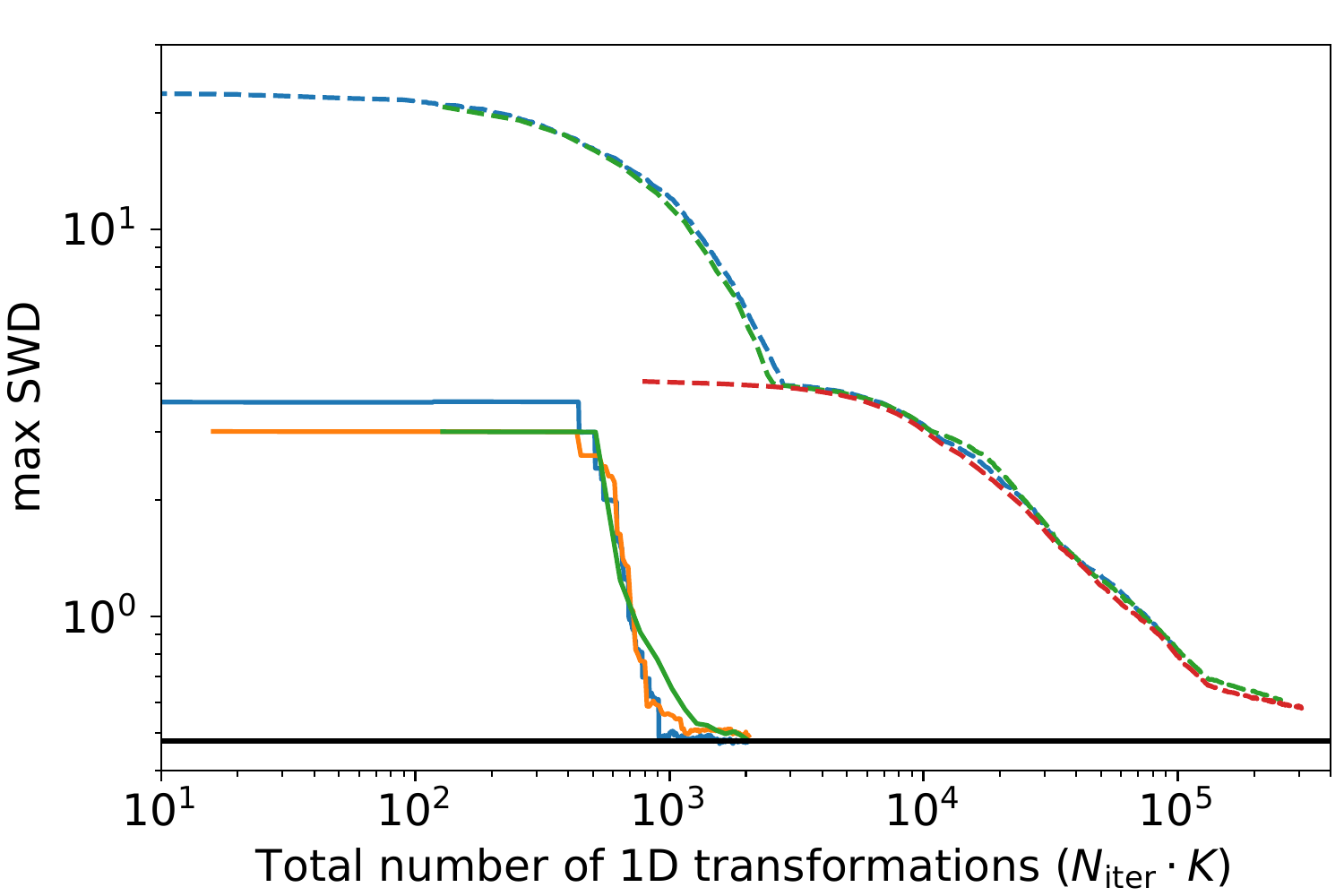}
     \end{subfigure}
     \caption{Sliced Wasserstein Distance (SWD, top panel) and Max-Sliced Wasserstein Distance (max SWD, bottom panel) between the MNIST test data and model samples as a function of total number of marginal transformations. The legend in the top panel also applies to the bottom panel. The SWD and max SWD between the training data and test data is shown in the horizontal solid black lines. The lines with "random" indicate that the axes are randomly chosen (like RBIG) instead of using the axes of max K-SWD. We also test $K=2,\ 4,\ 8,\ 32,$ and $64$. Their curves overlap with $K=1,\ 16$ and $128$ and are not shown in the plot.}
     \label{fig:convergence}
     \vskip -0.15in
\end{figure}

\begin{figure}[htb]
     \centering
     \begin{subfigure}[b]{\linewidth}
         \centering
         \includegraphics[width=\linewidth]{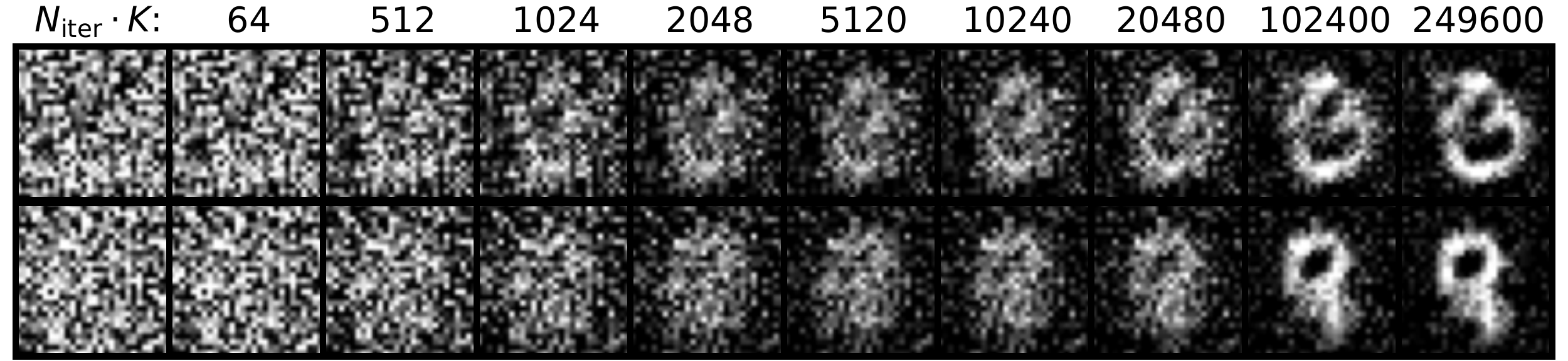}
     \end{subfigure}
     \hfill
     \begin{subfigure}[b]{\linewidth}
         \centering
         \includegraphics[width=\linewidth]{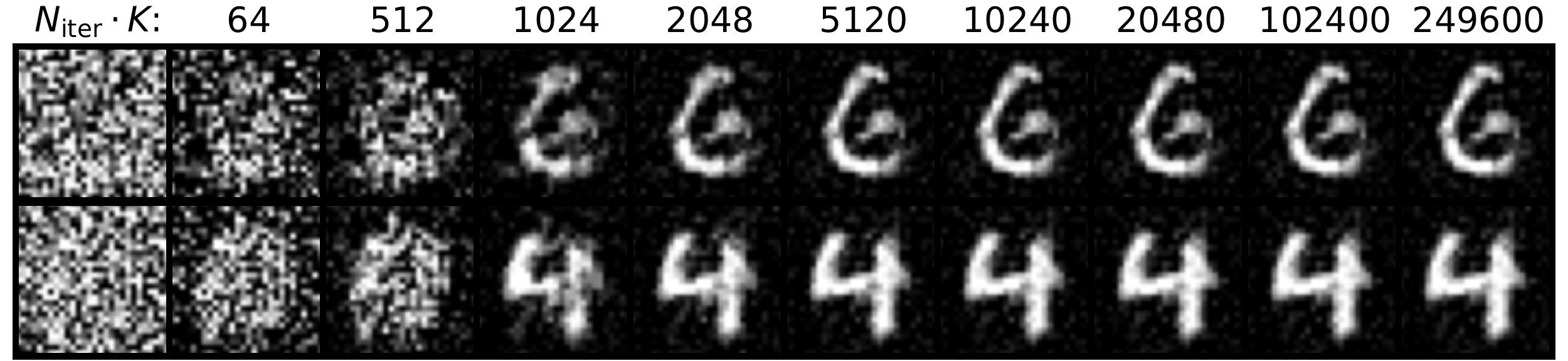}
     \end{subfigure}
     \begin{subfigure}[b]{\linewidth}
         \centering
         \includegraphics[width=\linewidth]{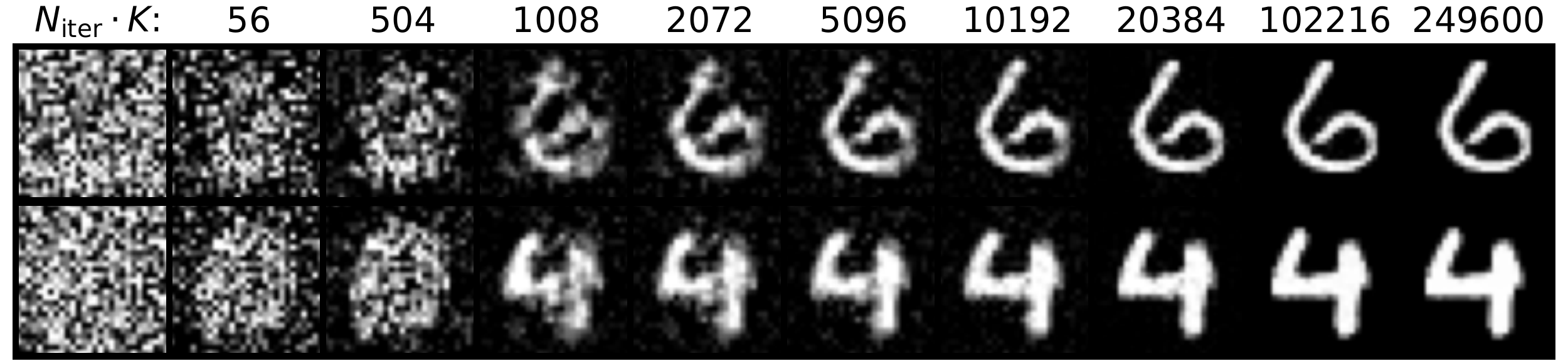}
     \end{subfigure}
    \caption{Top panel: SIG samples with random axes ($K=64$). Middle panel: SIG samples with optimized axes ($K=64$). Bottom panel: SIG samples with optimized axes and patch based hierarchical approach. The numbers above each panel indicate the number of marginal transformations.}
    \label{fig:ablation}
\end{figure}

\begin{figure}[htb]
     \centering
     \includegraphics[width=\linewidth]{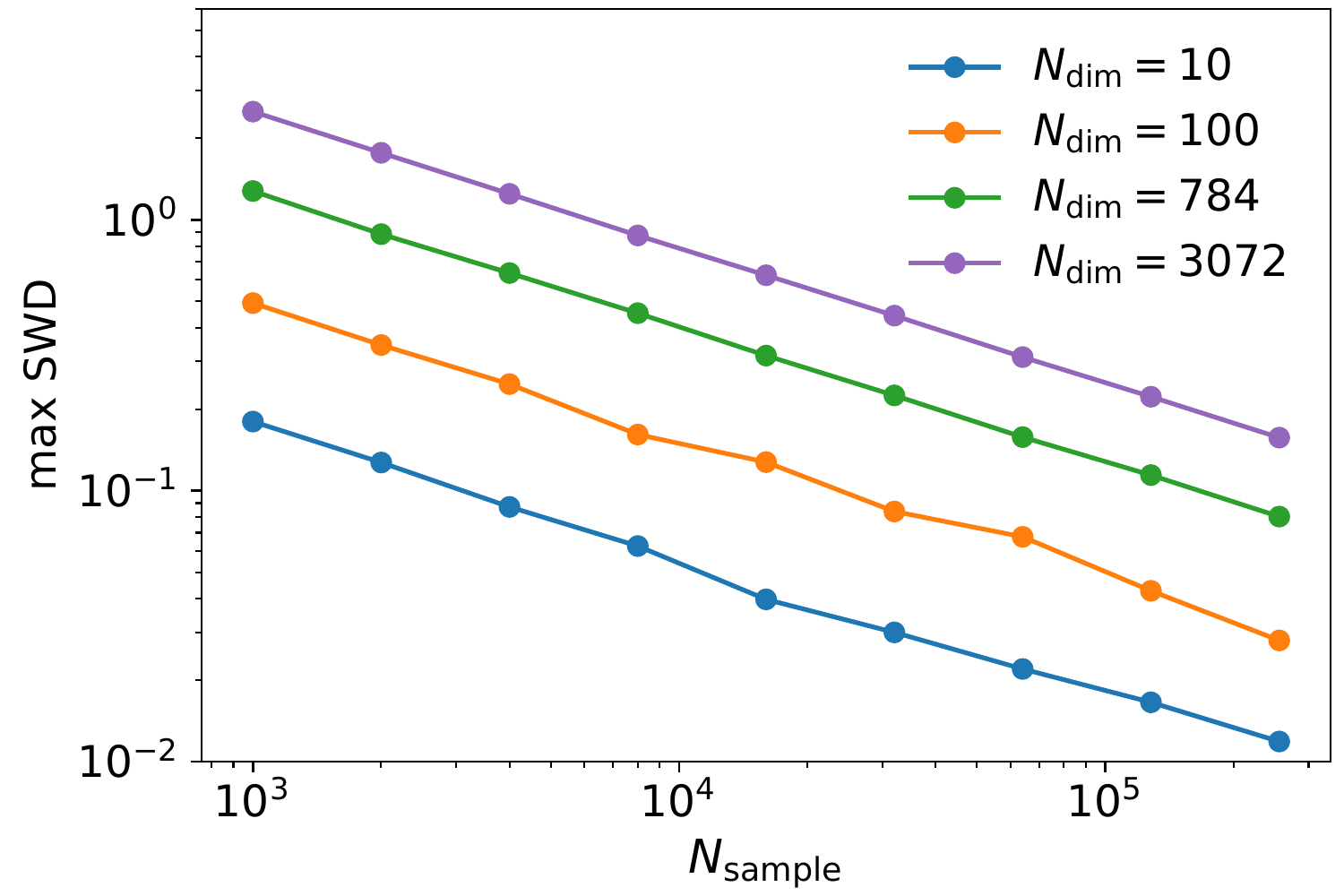}
     \caption{The measured maximum sliced Wasserstein distance between two Gaussian datasets as a function of number of samples. 10 different starting points are used to find the global maximum.}
    \label{fig:maxSWD_nsample}
    \vskip -0.10in
\end{figure}

We firstly test the convergence of SIG on MNIST dataset with different $K$ choices. We measure the SWD (Equation \ref{eq:SWp}) and max SWD (Equation \ref{eq:maxSWp}) between the test data and model samples for different iterations (without patch based hierarchical modeling). The results are presented in Figure \ref{fig:convergence}. The SWD is measured with 10000 Monte Carlo samples and averaged over 10 times. The max SWD is measured with Algorithm \ref{alg:KmaxSWD} ($K=1$) using different starting points in order to find the global maximum. We also measure the SWD and max SWD between the training data and test data, which gives an estimate of the noise level arising from the finite number of test data. For the range of $K$ we consider ($1\leq K\leq 128$), all tests we perform converges to the noise level, and the convergence is insensitive to the choice of $K$, but mostly depends on the total number of 1D transformations ($N_{\mathrm{iter}}\cdot K$). As a comparison, we also try running SIG with random orthogonal axes per iteration, and for MNIST, our greedy algorithm converges with two orders of magnitude fewer marginal transformations than random orthogonal axes (Figure \ref{fig:convergence}).

For $K=1$, the objective function (Equation \ref{eq:minimax}) is the same as max SWD, so one would expect that the max SWD between the data and the model distribution keep decreasing as the iteration number increases. For $K>1$, the max K-SWD is bounded by max SWD (Equation \ref{eq:max-K-SWD-bound1} and \ref{eq:max-K-SWD-bound2}) so one would also expect similar behavior. However, from Figure \ref{fig:convergence} we find that max SWD stays constant in the first 400 iterations. This is because SIG fails to find the global maximum of the objective function in those iterations, i.e., the algorithm converges at some local maximum that is almost perpendicular to the global maximum in the high dimensional space, and therefore the max SWD is almost unchanged. This suggests that our algorithm does not require global optimization of $A$ at each iteration: even if we find only a local maximum, it can be compensated with subsequent iterations. Therefore our model is insensitive to the initialization and random seeds. This is very different from the standard non-convex loss function optimization in deep learning with a fixed number of layers, where the random seeds often make a big difference \citep{lucic2018gans}.

In Figure \ref{fig:ablation} we show the samples of SIG of random axes, optimized axes and hierarchical approach. On the one hand, the sample quality of SIG with optimized axes is better than that of random axes, suggesting that our proposed objective max K-SWD improves both the efficiency and the accuracy of the modeling. On the other hand, SIG with optimized axes has reached the noise level on both SWD and max SWD at around 2000 marginal transformations (Figure \ref{fig:convergence}), but the samples are not good at that point, and further increasing the number of 1D transformations from 2000 to 200000 does not significantly improve the sample quality. At this stage the objective function of Equation \ref{eq:minimax} is dominated by the noise from finite sample size, and the optimized axes are nearly random, which significantly limits the efficiency of our algorithm. To better understand this noise, we do a simple experiment by sampling two sets of samples from the standard normal distribution $\mathcal{N}(0,I)$ and measuring the max SWD using the samples. The true distance should be zero, and any nonzero value is caused by the finite number of samples. In Figure \ref{fig:maxSWD_nsample} we show the measured max SWD as a function of sample size and dimensionality. For small number of samples and high dimensionality, the measured max SWD is quite large, suggesting that we can easily find an axis where the marginalized PDF of the two sets of samples are significantly different, while their underlying distribution are actually the same. Because of this sample noise, once the generated and the target distribution are close to each other (the max K-SWD reached the noise level), the optimized axes becomes random and the algorithm becomes inefficient. To reduce the noise level, one needs to either increase the size of training data or decrease the dimensionality of the problem. The former can be achieved with data augmentation. In this study we adopt the second approach, i.e., we effectively reduce the dimensionality of the modeling with a patch based hierarchical approach. The corresponding samples are shown in the bottom panel of Figure \ref{fig:ablation}. We see that the sample quality keeps improving after 2000 marginal transformations, because the patch based approach reduces the effective noise level.

\subsection{Effects of regularization parameter $\alpha$ in density estimation}

\begin{figure}[htb]
     \centering
     \begin{subfigure}[b]{0.495\linewidth}
         \centering
         \includegraphics[width=\linewidth]{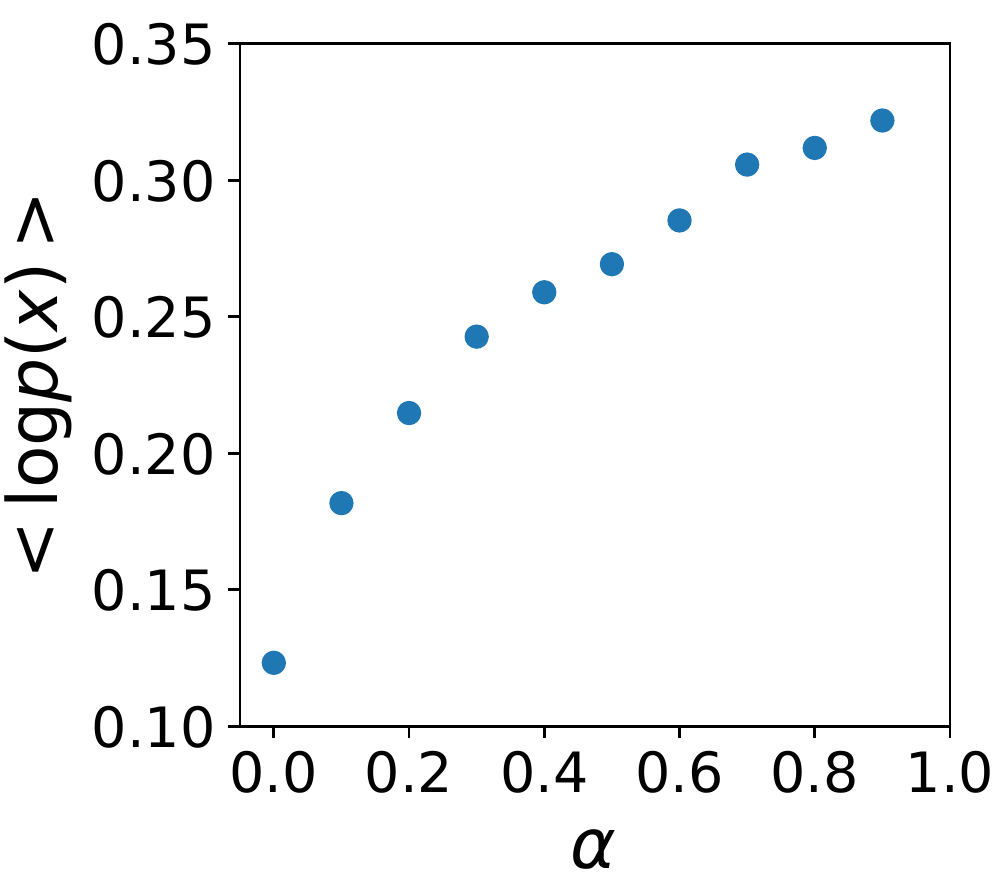}
     \end{subfigure}
     \hfill
     \begin{subfigure}[b]{0.495\linewidth}
         \centering
         \includegraphics[width=\linewidth]{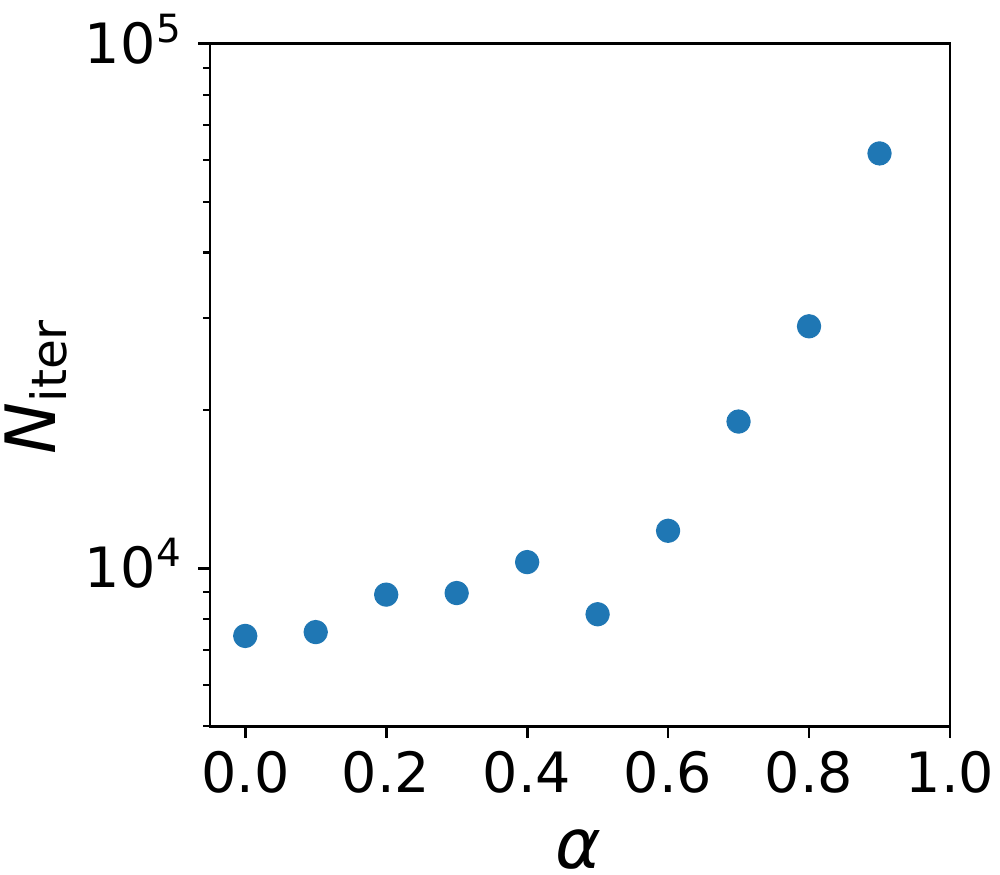}
     \end{subfigure}
     \caption{Test log-likelihood (left panel) and number of iterations (right panel) as a function of regularization parameter $\alpha$ on POWER dataset.}
     \label{fig:alpha}
     \vskip -0.15in
\end{figure}

To explore the effect of regularization parameter $\alpha$, we train GIS on POWER dataset with different $\alpha$. We keep adding iterations until the log-likelihood of validation set stops improving. The final test $\log p$ and the number of iterations are shown in Figure \ref{fig:alpha}. We see that with a larger $\alpha$, the algorithm gets better density estimation performance, at the cost of taking more iterations to converge. Setting the regularization parameter $\alpha$ is a trade-off between performance and computational cost.

\section{Experimental details}
\label{sec:detail}

\begin{table*}[htb]
  \caption{GIS hyperparameters for density-estimation results in Table \ref{tab:density}.}
  \label{tab:hyper_GIS}
  \vskip 0.15in
  \centering
  \begin{tabular}{>{\centering}c|>{\centering}c>{\centering}c>{\centering}c>{\centering}c>{\centering}c|>{\centering}m{0.09\linewidth}>{\centering\arraybackslash}m{0.09\linewidth}}
    \toprule
    Hyperparameter & POWER & GAS & HEPMASS & MINIBOONE & BSDS300 & MNIST & Fashion\\
    \midrule\midrule
    $K$ & 6 & 8 & 8 & 8 & 8 & 8\ ($q=4$)\ 4\ ($q=2$) & 8\ ($q=4$)\ 4\ ($q=2$)\\
    $\alpha=(\alpha_1, \alpha_2)$ & (0.9,0.9) & (0.9,0.9) & (0.95, 0.99) & (0.95, 0.999) & (0.95, 0.95) & (0.9, 0.99) & (0.9, 0.99)\\
    $b$ & 2 & 1 & 1 & 2 & 5 & 1 & 1\\
    \bottomrule
  \end{tabular}
  \vskip -0.1in
\end{table*}

\begin{table*}[htb]
  \caption{The architectures of SIG for modeling different image datasets in Section \ref{subsec:samples}. The architecture is reported in the format of $(q^2\cdot c,K)\times L$, where $q$ is the side length of the patch, $c$ is the depth of the patch, $K$ is the number of marginal transformations per patch, and $L$ is the number of iterations for that patch size. MNIST and Fashion-MNIST share the same architecture.}
  \label{tab:hyper_SIG}
  \vskip 0.15in
  \centering
  \begin{tabular}{>{\centering}c|>{\centering}c>{\centering}c>{\centering\arraybackslash}c}
    \toprule
    & MNIST / Fashion-MNIST & CIFAR-10 & CelebA\\ 
    \midrule\midrule
     \multirow{16}{*}{architecture} & $(28^2\cdot 1,56)\times 100$ & $(32^2\cdot 3,64)\times 200$ & $(64^2\cdot 3,128)\times 200$\\
    & $(14^2\cdot 1,28)\times 100$ & $(16^2\cdot 3,32)\times 200$ & $(32^2\cdot 3,64)\times 200$\\
    & $(7^2\cdot 1,14)\times 100$ & $(8^2\cdot 3,16)\times 200$ & $(16^2\cdot 3,32)\times 200$\\
    & $(6^2\cdot 1,12)\times 100$ & $(8^2\cdot 1,8)\times 100$ & $(8^2\cdot 3,16)\times 200$\\
    & $(5^2\cdot 1,10)\times 100$ & $(7^2\cdot 3,14)\times 200$ & $(8^2\cdot 1,8)\times 100$\\
    & $(4^2\cdot 1,8)\times 100$ & $(7^2\cdot 1,7)\times 100$ & $(7^2\cdot 3,14)\times 200$\\
    & $(3^2\cdot 1,6)\times 100$ & $(6^2\cdot 3,12)\times 200$ & $(7^2\cdot 1,7)\times 100$\\
    & $(2^2\cdot 1,4)\times 100$ & $(6^2\cdot 1,6)\times 100$ & $(6^2\cdot 3,12)\times 200$\\
    & & $(5^2\cdot 3,10)\times 200$ & $(6^2\cdot 1,6)\times 100$\\
    & & $(5^2\cdot 1,5)\times 100$ & $(5^2\cdot 3,10)\times 200$\\
    & & $(4^2\cdot 3,8)\times 200$ & $(5^2\cdot 1,5)\times 100$\\
    & & $(4^2\cdot 1,4)\times 100$ & $(4^2\cdot 3,8)\times 200$\\
    & & $(3^2\cdot 3,6)\times 200$ & $(4^2\cdot 1,4)\times 100$\\
    & & $(3^2\cdot 1,3)\times 100$ & $(3^2\cdot 3,6)\times 200$\\
    & & $(2^2\cdot 3,4)\times 200$ & $(3^2\cdot 1,3)\times 100$\\
    & & $(2^2\cdot 1,2)\times 100$ & $(2^2\cdot 3,6)\times 100$\\
    \midrule
    Total number of iterations $L_{\mathrm{iter}}$ & 800 & 2500 & 2500\\
    \bottomrule
  \end{tabular}
  \vskip -0.1in
\end{table*}

The hyperparameters of GIS include the number of axes per iteration $K$, the regularization $\alpha$, and the KDE kernel width factor $b$. We have two different $\alpha$ values: $\alpha=(\alpha_1, \alpha_2)$, where $\alpha_1$ regularizes the rational quadratic splines, and $\alpha_2$ regularizes the linear extrapolations. The KDE kernel width $\sigma$ is determined by the Scott's rule \citep{scott2015multivariate}:
\begin{equation}
    \sigma = b N^{-0.2}\sigma_{\mathrm{data}} ,
\end{equation}
where $N$ is the number of training data, and $\sigma_{\mathrm{data}}$ is the standard deviation of the data marginalized distribution.

The hyperparameters for density-estimation results in Table \ref{tab:density} are shown in Table \ref{tab:hyper_GIS}. $K$ is determined by $K=\min(8,d)$. For BSDS300 we first whiten the data before applying GIS. For high dimensional image datasets MNIST and Fashion-MNIST, we add patch-based iterations with patch size $q=4$ and $q=2$ alternately. Logit transformation is used as data preprocessing. For all of the datasets, we keep adding iterations until the validation $\log p$ stops improving.

For density estimation of small datasets, we use the following hyperparameter choices for large regularization setting: $b=1$, $K=\min(8,d)$,\\ $\alpha=(1-0.02\log_{10}(N_{\mathrm{train}}), 1-0.001\log_{10}(N_{\mathrm{train}}))$. While for low regularization setting we use $b=2$ and $\alpha=(0, 1-0.01\log_{10}(N_{\mathrm{train}}))$. The size of the validation set is $30\%$ of the training set size. All results are averaged over 5 different realizations.

\begin{figure}[t]
     \centering
      \includegraphics[width=\linewidth]{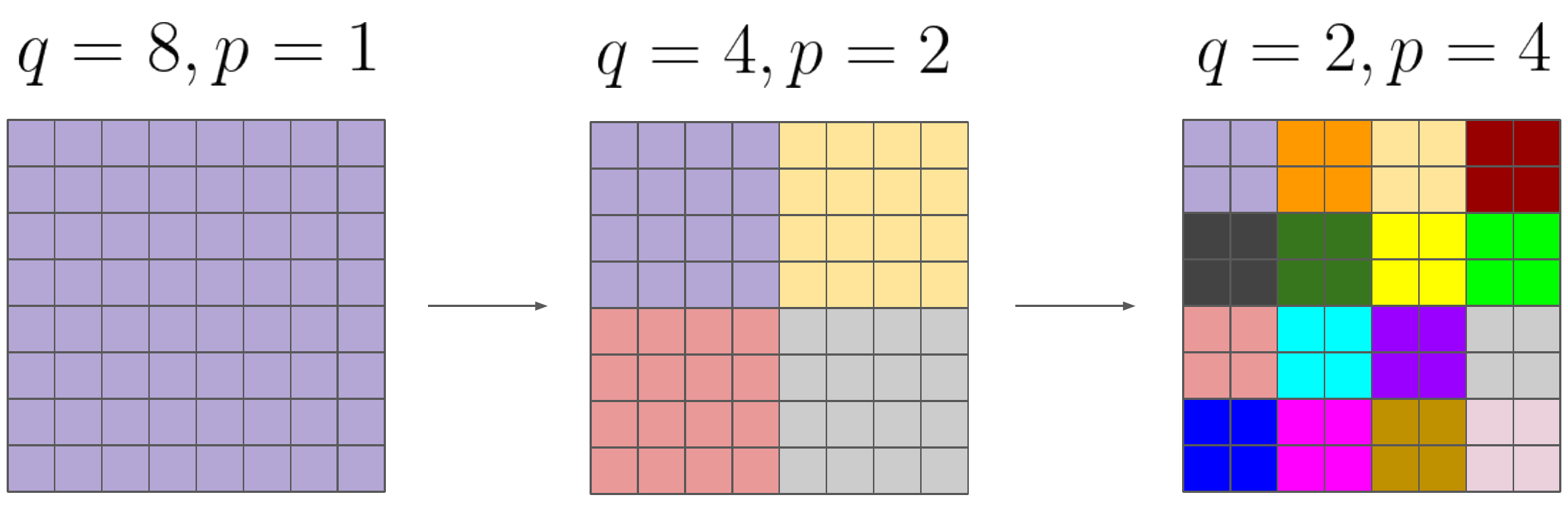}
     \caption{Illustration of the hierarchical modeling of an $S=8$ image. The patch size starts from $q=8$ and gradually decreases to $q=2$.}
     \label{fig:patch_hierarchy}
     \vskip -0.1in
\end{figure}

The hyperparameters of SIG include the number of axes per iteration $K$, and the patch size for each iteration, if the patch-based approach is adopted. We show the SIG hyperparameters for modeling image datasets in Table \ref{tab:hyper_SIG}. As discussed in Section \ref{subsec:patch}, the basic idea of setting the architecture is to start from the entire image, and then gradually decrease the patch size until $q=2$. An illustration of the patch-based hierarchical approach is shown in Figure \ref{fig:patch_hierarchy}. We set $K=q$ or $K=2q$, depending on the datasets and the depth of the patch. For each patch size we add $100$ or $200$ iterations.

For OOD results in Section \ref{subsec:ood}, we train SIG and GIS on Fashion-MNIST with $K=56$. GIS is trained with $b=1$ and $\alpha=0.9$ (the results are insensitive to all these hyperparameter choices). We do not use logit transformation preprocessing, as it overamplifies the importance of pixels with low variance. The number of iterations are determined by optimizing the validation $\log p$. For SIG, which cannot produce good $\log p$, 
the results shown in Table \ref{tab:auroc} use 100 iterations, but we verify they do not depend on this choice and are stable up to thousands of iterations. 

%

\end{document}